%% file: neurips_2026.tex
\documentclass{article}

\PassOptionsToPackage{numbers, compress}{natbib}

\usepackage[main, final]{neurips_2026}

\usepackage[utf8]{inputenc} 
\usepackage[T1]{fontenc}    
\usepackage{hyperref}       
\usepackage{url}            
\usepackage{booktabs}       
\usepackage{amsfonts}       
\usepackage{nicefrac}       
\usepackage{microtype}      
\usepackage{xcolor}         

\usepackage{bbm}
\usepackage{graphicx}
\usepackage{subcaption}
\usepackage{multicol}
\usepackage{multirow}
\usepackage{booktabs}
\usepackage{float}
\usepackage{xcolor}
\usepackage{amsmath}
\usepackage{amssymb}
\usepackage{mathtools}
\usepackage{amsthm}

\theoremstyle{plain}
\newtheorem{theorem}{Theorem}[section]
\newtheorem{proposition}[theorem]{Proposition}

\theoremstyle{definition}

\newtheorem{assumption}[theorem]{Assumption}
\theoremstyle{remark}
\newtheorem{remark}[theorem]{Remark}

\title{Dynamic Dual-Feedback Conformal Inference \\ for Time Series Forecasting}

\author{%
Songlin Du \quad Ling Luo \quad Uwe Aickelin \\
Faculty of Engineering and Information Technology \\
School of Computing and Information Systems \\
University of Melbourne \\
\texttt{sodu@student.unimelb.edu.au} \quad
\texttt{ling.luo@unimelb.edu.au} \\
\texttt{uwe.aickelin@unimelb.edu.au}
}

\begin{document}

\maketitle

\begin{abstract}
  Quantifying uncertainty in time series forecasting is particularly demanding because sequential data exhibit temporal dependence and are prone to distributional changes. Conformal inference has emerged as a powerful uncertainty quantification approach through the construction of prediction sets. Recent advances have introduced online conformal methods that adaptively adjust prediction thresholds through feedback mechanisms. However, the existing feedback mechanism typically relies solely on miscoverage indicators (actual feedback)—whether the true label falls within the interval at each time step—while overlooking the empirical prediction threshold (estimated feedback) that is derived from the oracle conformal method. In this paper, we propose \textit{Dynamic Dual-feedback Conformal Inference} (DDCI), which incorporates a dual-feedback mechanism consisting of \textit{actual feedback} and \textit{estimated feedback}. The former drives the primary adjustment of the intervals based on true observations, while the latter dampens excessive expansions or contractions by leveraging empirical thresholds from conformal inference during updates. By balancing these two signals, DDCI achieves more stable and narrower prediction intervals in sequential settings while maintaining near-nominal coverage in practice. Code is available here: \texttt{https://github.com/sdu1998/DDCI}.
\end{abstract}

\section{Introduction}
\label{sec1:intro}

Machine learning models are increasingly being applied in many real-world domains such as health care, energy and transportation, where uncertainty quantification is essential for decision-making \citep{diaz2012review, badue2021self, liu2025stochastic}. Generally speaking, prediction sets/intervals are a common approach to reflect models' uncertainty. However, traditional machine learning or statistical models themselves are difficult to quantify uncertainty, or require strong assumptions on the data distributions \citep{durbin2012time, gasthaus2019probabilistic, salinas2020deepar}. 

Conformal inference, firstly introduced by \citet{vovk1999machine}, is a non-parametric framework to construct prediction sets, which enjoys statistically marginal coverage guarantee under a lenient assumption called exchangeability. This framework offers a general methodology for converting the outputs of black-box models into prediction sets and can be applied on top of any underlying algorithm \citep{angelopoulos2023conformal}. The simplicity and generality of conformal methods have led to their wide adoption in real-world applications \citep{eklund2015application, luo2024sample}. Fundamentally, the prediction sets generated by conformal methods are guaranteed to contain the true response with a user-defined probability. However, exchangeability often fails to hold in sequential data and the validity of conformal inference may be compromised because of distributional changes arising from external factors such as seasonality or events. Motivated by this challenge, this work studies the problem of conformal inference for time series data.

In recent years, substantial effort has been devoted to developing conformal methods tailored for non-exchangeable time series data to achieve the target coverage guarantee. A prominent line of research is to update the prediction interval via feedback mechanisms. The very first method, Adaptive Conformal Inference (ACI), attains the desired coverage rate by dynamically adjusting target quantiles, treating distributional shifts as a single-parameter learning problem \citep{gibbs2021adaptive}. Following that, multiple methods have been designed to improve the predictive efficiency \citep{zaffran2022adaptive, bhatnagar2023improved, angelopoulos2023pid}. These methods update the length of the interval by determining whether the true label falls within the interval (binary feedback from current thresholds). Recently, \citet{wu2025error} proposed Error-quantified Conformal Inference (ECI), which incorporates the degree of under- or over-coverage to refine interval lengths. However, their approach does not fully exploit the magnitude of the error, thereby weakening the corrective effect of the feedback. More critically, a major limitation of all existing methods is that all of them rely exclusively on the signal from the current thresholds, which makes the update process unstable and prone to fluctuations. In other words, the interval will gradually shrink until it can no longer cover the true value at a certain point in time, after which it will expand.

Due to the limitations mentioned above, we first propose a dynamic feedback function that more precisely quantifies the \textit{actual feedback}. Secondly, to mitigate potential overreaction to this signal, we introduce an additional component, termed \textit{estimated feedback}, which is from empirical thresholds derived from split conformal methods. This second feedback introduces a counter-signal with opposite sign to the actual feedback through the empirical thresholds, serving as a control mechanism that suppresses excessive adjustments and ensures smoother, more stable updates. Our main contributions are summarised as follows:

\begin{itemize}
    \item We propose a novel online conformal methods \textit{Dynamic Dual-feedback Conformal Inference} (DDCI) for time series forecasting, which is able to quantify the actual feedback precisely and explores how empirical thresholds from conformal methods influence the stability of the updating process.
    \item Furthermore, we investigate the impact of different degrees of estimated feedback on the smoothness and efficiency of interval outputs. We also compare different kinds of estimated feedback from the original conformal method and the non-exchangeable conformal method and analyse the differences among them.
    \item We conduct considerable experiments on 3 synthetic and 5 real datasets to demonstrate that, in the vast majority of cases, our proposed methods can substantially shorten the prediction intervals without affecting the target coverage rate, compared to all existing methods.
\end{itemize}

\section{Related Work}
\label{sec2:related_work}

\subsection{Conformal Prediction}

The task of uncertainty quantification for unobserved data has been widely studied \citep{abdar2021review, smith2024uncertainty}. Conformal inference, pioneered by \citet{vovk2005algorithmic}, is a significant branch of uncertainty quantification methods. The power of conformal methods stems largely from its flexibility, simplicity and theoretical guarantee. Conformal methods are able to quantify distribution-free uncertainty under a lenient assumption of data distribution called exchangeability. Multiple variants of conformal methods and score functions have been developed to improve the adaptivity and efficiency \citep{vovk2015cross, lei2018distribution, romano2019conformalized, angelopoulos2020uncertainty, papadopoulos2002inductive, barber2021predictive, akhavan2026softmax} and have been applied to many areas \citep{vovk2021retrain, stanton2023bayesian, cresswell2024conformal, lekeufack2024conformal, zhang2024benefit, du2025duen}. A detailed introduction of conformal inference and the theory behind it are provided by~\citet{angelopoulos2023conformal, angelopoulos2024theoretical}.

\subsection{Conformal Inference under Non-exchangeability}

Conformal methods highly rely on the exchangeability, but it is often violated when they are applied to practical problems (e.g., sequential data). Recent works of developing conformal methods beyond exchangeability can be summarised into two categories. The first is to treat the past conformal scores non-equally by assigning weights, giving greater emphasis to those more similar to the present \citep{tibshirani2019conformal, xu2021conformal, barber2023conformal}. Recent approaches model temporal dependence and rely on asymptotic guarantees that require assumptions on the underlying data-generating process \cite{xu2023sequential, auer2023conformal}. The second is to adaptively update the significance level $\alpha$ or the interval directly based on the miscoverage feedback \citep{gibbs2021adaptive, zaffran2022adaptive, bhatnagar2023improved, gibbs2024conformal}. The following work \citep{angelopoulos2023pid} advanced this line of research by framing PID control theory to prospectively model conformal scores in a predictive and adaptive manner for online scenarios. \citet{angelopoulos2024online} decay the learning rate based on time steps. Following that, \citet{wu2025error} introduced an error-quantified term to update the interval by using a smoothed quantile loss. Although our work is highly inspired by the error-quantified term of \citet{wu2025error}, it differs in two aspects: (i) our feedback mechanism is residual-aware which makes it more applicable to different datasets and underlying algorithms; (ii) ours also considers an estimated feedback that is from conformal inference itself, where all existing methods did not take this empirical threshold into account. Overall, our method is therefore able to produce substantially tighter prediction sets while maintaining its validity. There are some very recent works related to this area \citep{neglia2025rescp, hu2025distribution, lee2025flow}. Closely related to our work, the concurrent work Conformal Optimistic Prediction (COP) proposes an optimistic conformal prediction framework \citep{hu2025distribution}. It leverages the distribution information of non-conformity scores as an optimistic hint within the Online Optimistic Gradient Descent framework to refine the prediction intervals.

\section{Problem Setup}
\label{sec3:problem_setup}

\textbf{Conformal Inference for Time Series.}
Under the background of time series forecasting,  we observe a sequence of covariates $x_t\in\mathcal{X}$ and responses $y_t\in\mathcal{Y}$ for $t\in \mathbb{N} = \{1, 2, 3, ..., T\}$. Then, we train an underlying model $\theta_t(\cdot)$ on the data before time $t$. As in oracle conformal methods, the conformal score function $s_t(\cdot, \cdot)$ can be designed in a flexible manner, which is able to measure the accuracy of model's prediction at time $t$. A typical choice of score function for regression problems is using the absolute error $s_t(x_t, y_t) = |y_t - \theta_t(x_t)|$. The main goal is to construct a prediction interval:

\begin{equation}
        \label{eq:interval}
        C_t(x_t) = \{ y_t\in\mathcal{Y}:s_t(x_t, y_t) \leq q_t^*\},
\end{equation}

where $q_t^*$, the empirical threshold, is the $(1-\alpha)$-th quantile from the distribution of past $w$ scores $S_t = \{s_i(x_i, y_i):t-w\leq i < t\}$. In principle, if the data is exchangeable, the interval output is expected to be marginally valid: $P(y_t \in C_t(x_t)) \geq 1-\alpha$.

Nevertheless, the challenge for time series forecasting is that the sequence data can be non-exchangeable due to external factors (e.g., distribution shifts). Therefore, a general goal is to achieve a long-run coverage guarantee that $\lim_{T \to \infty} \frac{1}{T} \sum_{t=1}^{T} err_t = \alpha,$ where $err_t = \mathbbm{1}\{y_t\notin C_t(x_t)\}$ \cite{angelopoulos2023pid, podkopaev2024adaptive}.



\textbf{Online Conformal Inference.}
The online update of prediction interval started from ACI \citep{gibbs2021adaptive} and evolved into the quantile tracking component inside the  conformal PID control method \citep{angelopoulos2023pid}, called online gradient descent (OGD). It considers an optimisation problem to minimise the quantile loss $\ell_t(q_t)$,

\begin{equation}
    q_{t+1} := q_t - \eta_t \nabla \ell_t(q_t) = q_t + \eta_t \bigl(\text{err}_t - \alpha \bigr),
    \label{formula: OGD}
\end{equation}

where $\eta_t$ is a non-negative learning rate and $\ell_t(q_t) = (s_t - q_t)\bigl(\mathbf{1}\{s_t > q_t\} - \alpha \bigr)$ means the $(1-\alpha)$-th quantile loss. 
Simply put, if the interval $C_t(x_t)$ fails to cover the true label at time $t$, the interval is expanded to make it conservative in the next step; otherwise, it is shortened.

\section{Methodology}

\subsection{Dynamic Dual-feedback Conformal Inference}

In the standard OGD update rule (Eq.~\ref{formula: OGD}), the threshold adjustment relies solely on the binary coverage feedback ($\text{err}_t - \alpha$). While effective, this signal is too coarse as it only indicates whether the prediction interval covers the true label, without reflecting the magnitude or reliability of the error. Additionally, relying on a single feedback can make the update process unstable and inclined to volatility.

To address this limitation, we augment the update rule with two complementary forms of feedback. To clarify, $q_t$ and $q_t^*$ denote the current threshold and the estimated threshold from conformal inference respectively. The first is the \emph{actual feedback}, expressed as 
\begin{equation}
    \mathcal{A}(e_t) = \rho_t h(c \cdot e_t),\ \ \ \ \ \rho_t = \min(\frac{|e_t|}{\max_{t-w \leq i,j \leq t} \, \lvert s_i - s_j \rvert}, 1) \in [0, 1],
    \label{formula:actual}
\end{equation}
which incorporates the observed error $e_t = s_t-q_t$ and uses a squashing function $h(\cdot)$ to weight the update by the severity of miscoverage. This squashing function is capable of mapping the error to $[-d, d]$ and is origin-symmetric, e.g., $tanh(x)$. This ensures that large deviations from the threshold lead to stronger corrective actions. The parameter $c$ governs the steepness of the curve.

Furthermore, we employ the second feedback, termed \emph{estimated feedback}, given by
\begin{equation}
    \mathcal{E}(e_t^*) = \left(1 - \rho_t^*\right) h(c \cdot e_t^*),\ \ \ \ \ \rho_t^*=\frac{|e_t^*|}{\max_{t-w \leq i,j \leq t} \, \lvert s_i - s_j \rvert}\in [0, 1],
    \label{formula:estimated}
\end{equation}
which provides a stabilising signal based on the estimated error $e_t^*=s_t-q_t^*$ and $q_t^*$ is the empirical $(1-\alpha)$-th quantile of scores $S_t = \{s_i(x_i, y_i):t-w\leq i < t\}$. 

For theoretical simplicity, we set ${\max_{t-w \leq i,j \leq t} \, \lvert s_i - s_j \rvert} = 2B$ based on the Assumption \ref{assumption:1}. In practice, we use the maximum and minimum from the past window. This empirical quantile $q_t^*$ acts as a stabilising anchor. By referencing the robust empirical threshold $q_t^*$, it provides a local distributional context that prevents the interval from overfitting to instantaneous noise. Notice the term $(1 - \rho_t^*)$ acts as a natural adaptive gate. When $e_t^*$ is small ($|e_t^*| \ll 2B$), the term allows the estimated feedback to effectively dampen noise and stabilise the interval. When $e_t^*$ are large ($|e_t^*| \to 2B$), the term vanishes, automatically diminishing the estimated feedback. This allows the algorithm to prioritise the actual feedback for rapid adaptation without interference. Thus, the mechanism intelligently balances stability and responsiveness based on the magnitude of the prediction error relative to the global bound $2B$. A visual comparison of these two feedback is given in Fig.~\ref{fig:feedback}, with $tanh(x)$ as an example squashing function.

\begin{figure*}[tb]
    \centering
    \begin{subfigure}{0.4\textwidth}
        \centering
        \includegraphics[width=\linewidth]{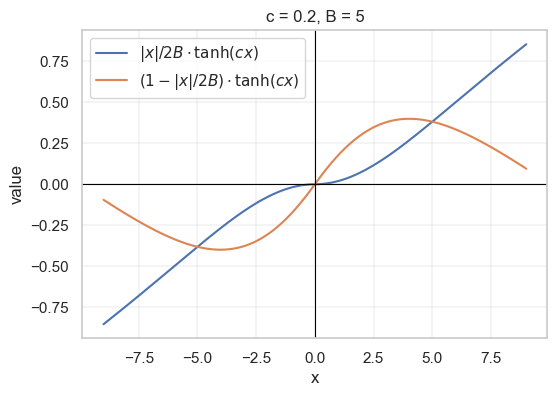}
        \vspace{-5mm}
        \caption{$B=5$}
    \end{subfigure}
    \begin{subfigure}{0.4\textwidth}
        \centering
        \includegraphics[width=\linewidth]{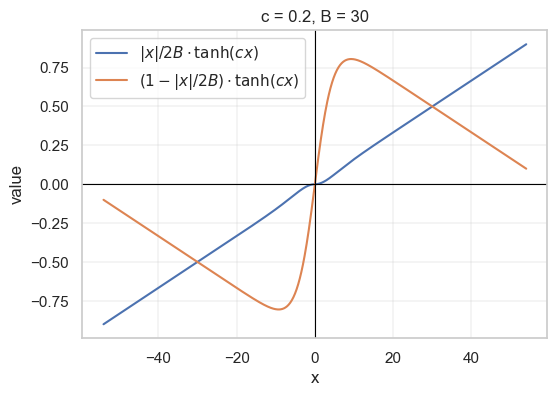}
        \vspace{-5mm}
        \caption{$B=30$}
    \end{subfigure}
    \vspace{-2mm}
    \caption{Functional shape comparison of the actual feedback $\tfrac{|x|}{2B}\tanh(cx)$ (blue) and the estimated feedback $(1-\frac{|x|}{2B})\tanh(cx)$ (orange) with $c=0.2$ and $B = 5 $ or $30$. The two functions are plotted on the same axis to highlight their distinct regimes of dominance: the former grows in magnitude and dominates for large |x|, whereas the latter exhibits attenuation and dominates near the origin.}
    \label{fig:feedback}
\end{figure*}

Therefore, we propose \textit{Dynamic Dual-feedback Conformal Inference} (DDCI), based on the OGD method (Eq.~\ref{formula: OGD}) and the dual-feedback mechanism, the update rule is given below:
\begin{equation}
    \begin{aligned}
    q_{t+1} := q_t + \eta_t(\text{err}_t - \alpha)
+ \eta_t ( 
    \underbrace{\mathcal{A}(s_t - q_t)}_{\text{Actual feedback}} -
    \underbrace{\epsilon\mathcal{E}(s_t - q_t^*)}_{\text{Estimated feedback}}
),
    \end{aligned}
    \label{formula:DDCI}
\end{equation}
where $\epsilon > 0$ is a parameter to regulate the influence of the estimated feedback term. The estimated feedback enters the update rule with an opposite sign to the actual feedback term, forming a push-pull mechanism that prevents over-reaction. Specifically, when both $\mathcal{A}(e_t)$ and $\mathcal{E}(e^*_t)$ are positive, the actual feedback drives $q_t$ upward while the estimated feedback applies a counteracting downward pull. This mechanism is particularly effective when $|q_t - q^*_t|$ is small: $\mathcal{A}$ and $\mathcal{E}$ have similar magnitudes, allowing the estimated feedback to dampen excessive expansion. Conversely, when $q_t$ deviates substantially from $q^*_t$ (indicating a genuine distribution shift), the actual feedback dominates and drives the threshold to adapt quickly.
As shown in Fig.~\ref{fig:feedback}, the actual feedback term $\mathcal{A}(e_t)$ is highly sensitive to sudden shifts or anomalous values of $s_t$, which can exert a strong influence on the update. By balancing these signals, the update adapts flexibly to distribution changes while maintaining robustness, thereby constructing prediction intervals that are both responsive and stable. 

To further understand this dynamic feedback mechanism, a mathematical decomposition of the additional feedback mechanism within Appendix~\ref{appendix:regret} (Stabilised Control term) reveals that the mechanism acts as an adaptive gating function: it damps high-frequency noise when $|e_t|$ is small while restoring corrective force during significant distribution shifts.

\subsection{Theoretical Analysis}
\label{mt:TA}

The main theoretical analysis of regret bound and long-run coverage of the proposed method DDCI is presented in this section. More details and the analysis on the parameter $\epsilon$ are provided in the Appendix \ref{app:ta}. 

At first, we start from a general assumption for conformal scores, which sets a boundary \citep{angelopoulos2023pid, wu2025error}.

\begin{assumption}
    \textit{For any } $t \in \mathbb{N}_+$\textit{, there exists } $0<B<\infty$ \textit{ such that } $s_t \in [-B, B]$.
    \label{assumption:1}
\end{assumption}

\begin{assumption}
    \textit{The origin-symmetric squashing function $h(\cdot)$ is chosen to be a bounded ($|h(\cdot)|\leq d$ for $x \in R$), and $L_h$-Lipschitz continuous function. Consequently, for any $c > 0$ and $x, y \in \mathbb{R}$: $|h(cx) - h(cy)| \leq L_h |cx - cy| = cL_h|x-y|$.}
    \label{assumption:2}
\end{assumption}

Based on these assumptions, there is a boundary for $q_t$, which are given in the following Proposition.

\begin{proposition}
    \textit{Fix an initial threshold $q_1 \in [-B, B]$. Assume that $\eta_t \leq \frac{2B}{1-\alpha+d(1+\epsilon)}$. The update rule in Eq.~\ref{formula:DDCI} with arbitrary non-negative learning rate $\eta_t$ satisfies that,}
    \vspace{-1mm}
    \begin{equation}
    -B-(\alpha + d(1+\epsilon)) M_{t-1} \le q_t \le B + (1-\alpha + d(1+\epsilon)) M_{t-1} \quad \forall t \ge 1,
    \end{equation}

    \text{where } $M_{0}$ = 0, $M_{t} = \max_{1 \leq r \leq t} \eta_{r} 
    \quad \text{for } t \geq 1.$ The upper bound $\eta_t \leq \frac{2B}{1-\alpha+d(1+\epsilon)}$ ensures that a single update step cannot move $q_t$ across the entire range $2B$.
    \label{proposition:1} 
    
\end{proposition}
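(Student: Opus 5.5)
The plan is a strong induction on $t$ that tracks the one-step increment of Eq.~\ref{formula:DDCI},
\[
\Delta_t := \text{err}_t-\alpha+\mathcal{A}(s_t-q_t)-\epsilon\,\mathcal{E}(s_t-q_t^*),\qquad q_{t+1}=q_t+\eta_t\Delta_t .
\]
First I will record crude bounds on $\Delta_t$. By Assumption~\ref{assumption:2}, $|h|\le d$, and both gates $\rho_t,1-\rho_t^*$ lie in $[0,1]$. Hence $|\mathcal{A}|\le d$ and $|\epsilon\mathcal{E}|\le \epsilon d$. Since $\text{err}_t\in\{0,1\}$, this gives
\[
-\bigl(\alpha+d(1+\epsilon)\bigr)\le \Delta_t\le 1-\alpha+d(1+\epsilon).
\]
So a single step moves $q_t$ up by at most $(1-\alpha+d(1+\epsilon))\eta_t$ and down by at most $(\alpha+d(1+\epsilon))\eta_t$. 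The base case $t=1$ holds because $q_1\in[-B,B]$ and $M_0=0$.

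For the inductive step on the upper bound, I split on the position of $q_t$. If $q_t\le B$, the crude bound gives $q_{t+1}\le B+(1-\alpha+d(1+\epsilon))\eta_t\le B+(1-\alpha+d(1+\epsilon))M_t$ directly. If $q_t>B$, Assumption~\ref{assumption:1} gives $s_t\le B<q_t$, so $y_t$ is covered, $\text{err}_t=0$, and $e_t=s_t-q_t<0$. I will use the odd symmetry of $h$, together with the sign-preserving property $\operatorname{sign}h(x)=\operatorname{sign}x$ (as for $\tanh$), to get $\mathcal{A}(e_t)\le 0$. The goal in this case is $\Delta_t\le 0$, so that $q_{t+1}\le q_t\le B+(\cdots)M_{t-1}\le B+(\cdots)M_t$, using that $M_t$ is nondecreasing. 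The lower bound is the mirror image. For $q_t<-B$ we have $s_t\ge -B>q_t$, hence $\text{err}_t=1$, $e_t>0$, $\mathcal{A}\ge 0$, and we want $\Delta_t\ge 0$. For $q_t\ge -B$ the crude lower bound on $\Delta_t$ applies. The step-size hypothesis $\eta_t\le 2B/(1-\alpha+d(1+\epsilon))$ is used only to note that one step from inside $[-B,B]$ cannot jump past the opposite side of the band. This keeps the two cases cleanly separated, so that an iterate starting above $B$ came from the ``inside'' case at its last entry.

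\textbf{Main obstacle.} The difficulty is the overshoot case $q_t>B$ (and symmetrically $q_t<-B$). There the increment is $\Delta_t=-\alpha+\mathcal{A}(e_t)-\epsilon\mathcal{E}(e_t^*)$. The estimated-feedback term has no controlled sign: $q_t^*\in[-B,B]$, so $e_t^*=s_t-q_t^*$ can be either sign, and $-\epsilon\mathcal{E}$ can be as large as $+\epsilon d$. Meanwhile $\mathcal{A}$ may be near $0$ when $\rho_t$ is small. I would first try to exploit the coupling between $e_t$ and $e_t^*$. Since $q_t>B\ge q_t^*$, we have $e_t<e_t^*$. The gate $\rho_t=|e_t|/2B$ grows exactly as $q_t$ moves away from the band, so once $q_t$ exceeds $B$ by a margin, $\mathcal{A}$ should be large enough in magnitude to dominate $\epsilon\mathcal{E}$. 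The intended comparison is $\rho_t|h(ce_t)|\ge \epsilon(1-\rho_t^*)|h(ce_t^*)|-\alpha$ via the Lipschitz bound in Assumption~\ref{assumption:2}.

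If this does not close, the fallback is a weaker claim. I would allow $q_t$ to keep increasing above $B$ only while the drift is positive, which requires $e_t^*<0$ and $e_t$ small, and show that this forces $q_t-B\le(1-\alpha+d(1+\epsilon))M_{t-1}$ anyway. Failing that, I would state the needed sufficient condition explicitly, for example $\epsilon d\le\alpha$, under which $\Delta_t\le-\alpha+\epsilon d\le 0$ outside the band and the induction goes through verbatim.
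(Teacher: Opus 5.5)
The overshoot case you single out is a genuine gap, and it cannot be closed. Without a condition such as $\epsilon d\le\alpha$ the proposition is false, already for the paper's default parameters, so neither the dominance argument nor your first fallback can succeed.

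For $q_t>B$ the increment is $-\alpha+\rho_t h(ce_t)-\epsilon\mathcal{E}(e_t^*)$. If $s_t$ sits just below its trailing quantile, then $-\epsilon\mathcal{E}(e_t^*)\approx\epsilon d$, while $|\rho_t h(ce_t)|\le (q_t-s_t)/(2B)$. So the actual feedback only wins once $q_t-s_t\gtrsim 2B(\epsilon d-\alpha)$, a margin proportional to $B$ rather than to $M_{t-1}$.

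Here is a concrete counterexample. Take $h=\tanh$ ($d=L_h=1$), $\alpha=0.1$, $\epsilon=0.2$, $B=1$, $w=100$, $c=10^6$, constant $\eta_t=\eta=0.01$, and $q_1=1$. Use strictly decreasing scores $s_t=1-\delta(t+w)$ with $\delta=10^{-5}$, burn-in included.

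Each window score exceeds $s_t$ by between $\delta$ and $w\delta$, so $e_t^*\in[-w\delta,-\delta]$ and $-\epsilon\mathcal{E}(e_t^*)\ge 0.2(1-w\delta/2)\tanh(10)>0.199$. As long as $q_t\ge 1$, we have $\text{err}_t=0$ and $\mathcal{A}(e_t)\ge-(q_t-s_t)/2$. Using $1-s_t\le 0.011$ for $t\le 1000$, this gives
\[
q_{t+1}-1\ \ge\ \Bigl(1-\tfrac{\eta}{2}\Bigr)(q_t-1)+0.093\,\eta .
\]
This keeps $q_t\ge 1$ and gives $q_{1000}>1.18$, whereas the claimed bound is $B+K_+\eta=1.021$. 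With the paper's $c=1/2$ the same happens after rescaling; for example, $B=10^5$, $\delta=6$, $\eta=100$ already gives $q_{31}>B+K_+\eta$. The lower bound fails in the mirror way once $\epsilon d>1-\alpha$.

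Your last fallback is therefore the correct repair, not a mere weakening. It needs three hypotheses:
\begin{itemize}
\item $\epsilon d\le\alpha$ for the upper side;
\item $\epsilon d\le 1-\alpha$ for the lower side (your ``mirror image'' needs this condition, not $\epsilon d\le\alpha$);
\item $xh(x)\ge 0$, which oddness in Assumption~\ref{assumption:2} does not supply, so you were right to list it separately.
\end{itemize}
Under these, your split on $q_t\le B$ versus $q_t>B$ proves the bound verbatim, and the step-size hypothesis is never needed.

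If one insists on $\epsilon d>\alpha$, your dominance idea still proves a weaker bound for odd nondecreasing $h$: $q_t\le B+R+K_+M_{t-1}$. Here $R$ is the least $r\ge 0$ with $\min\bigl(r/(2B),1\bigr)h(cr)\ge\epsilon d-\alpha$.

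The paper's own proof does not avoid this problem. It uses the same crude bounds but splits on $s_t\ge q_t$ versus $s_t\le q_t$. It derives only the upper bound in the first case, where $q_t\le s_t\le B$ is automatic, and only the lower bound in the second, and then ``combines'' them. The upper bound when $s_t<q_t$ with $q_t>B$ --- exactly your overshoot case --- is never treated. Its closing remark, that the actual feedback turns negative there, is your first attempt, and it fails for the reason above.
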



\subsubsection{Regret Bound}

At first, we analyse the theoretical stability of DDCI using the framework of Dynamic Regret \citep{daniely2015strongly, orabona2019modern, zhao2024adaptivity}. Dynamic regret generalises the standard regret framework to non-stationary environments by bench-marking the learner against a sequence of time-varying optimal comparators $u_t$, rather than a single fixed parameter \citep{zinkevich2003online}.


\begin{theorem}
\label{theorem:regret}
    DDCI achieves the following dynamic regret bound for arbitrary $u_t$ with a fixed learning rate $\eta$ and a small $\epsilon$ ($\epsilon \leq 1$):
    \vspace{-1mm}
    \begin{equation}
    \begin{aligned}
        \frac{1}{T}\sum_{t=1}^T (\ell_t(q_t) - \ell_t(u_t)) \le& \frac{2B^2}{\eta T} + \frac{2B+\eta K_+}{\eta T}\sum_{t=1}^T ||u_{t+1} - u_t||  + 2\epsilon^2 \gamma^2\frac{\eta}{T}\sum_{t=1}^T ||q_t - q_t^*||^2 \\
    & + \frac{(2B+\eta K_+) \epsilon \gamma}{T}\sum_{t=1}^T ||q_t - q_t^*||+ (2B+\eta K_+)d + \eta(1+2d^2),
    \end{aligned}
    \end{equation}
\end{theorem}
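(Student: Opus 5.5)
The plan is to treat DDCI as online (sub)gradient descent on the quantile loss $\ell_t$ driven by a perturbed gradient, and to run the standard dynamic-regret telescoping argument for OGD while carrying the perturbation along as separate error terms. Write the update in Eq.~\ref{formula:DDCI} as $q_{t+1}=q_t-\eta g_t$ with
\begin{equation*}
g_t=\nabla\ell_t(q_t)-\bigl(\mathcal{A}(e_t)-\epsilon\mathcal{E}(e_t^*)\bigr),\qquad \nabla\ell_t(q_t)=\alpha-\text{err}_t .
\end{equation*}

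\textbf{Step 1 (convexity and one-step expansion).} Since $\ell_t$ is convex, $\ell_t(q_t)-\ell_t(u_t)\le \nabla\ell_t(q_t)(q_t-u_t)$. Split the right-hand side as
\begin{equation*}
\nabla\ell_t(q_t)(q_t-u_t)=g_t(q_t-u_t)+\bigl(\mathcal{A}(e_t)-\epsilon\mathcal{E}(e_t^*)\bigr)(q_t-u_t).
\end{equation*}
For the first part, expand $\|q_{t+1}-u_t\|^2=\|q_t-u_t\|^2-2\eta g_t(q_t-u_t)+\eta^2g_t^2$. This gives
\begin{equation*}
g_t(q_t-u_t)=\frac{\|q_t-u_t\|^2-\|q_{t+1}-u_t\|^2}{2\eta}+\frac{\eta}{2}g_t^2 .
\end{equation*}

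\textbf{Step 2 (telescoping with moving comparators).} Replace $\|q_{t+1}-u_t\|^2$ by $\|q_{t+1}-u_{t+1}\|^2$. The error is at most $2D\|u_{t+1}-u_t\|$, where $D$ bounds $|q_t-u_t|$. I will take comparators $u_t\in[-B,B]$ and invoke Proposition~\ref{proposition:1}, which shows that $q_t$ exceeds $[-B,B]$ by at most $\eta(1-\alpha+d(1+\epsilon))$. Hence $D\le 2B+\eta K_+$, with $K_+$ the constant from the proposition, and I will define $K_+$ this way.

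Summing over $t$, the initial term gives $\|q_1-u_1\|^2/(2\eta)\le (2B)^2/(2\eta)=2B^2/\eta$. The comparator drift gives $\frac{2B+\eta K_+}{\eta}\sum_t\|u_{t+1}-u_t\|$. This also fixes the required range of $\eta$.

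\textbf{Step 3 (separating the estimated-feedback error).} Both the cross term and $g_t^2$ are handled with one decomposition:
\begin{equation*}
\mathcal{A}(e_t)-\epsilon\mathcal{E}(e_t^*)=\bigl(\mathcal{A}(e_t)-\epsilon\mathcal{E}(e_t)\bigr)+\epsilon\bigl(\mathcal{E}(e_t)-\mathcal{E}(e_t^*)\bigr).
\end{equation*}
Since $e_t-e_t^*=q_t^*-q_t$, Assumption~\ref{assumption:2} together with $|\rho_t^*-\rho_t|\le |e_t-e_t^*|/2B$ shows that $z\mapsto(1-|z|/2B)h(cz)$ is Lipschitz on $[-2B,2B]$ with constant $\gamma:=cL_h+d/(2B)$. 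Therefore $|\mathcal{E}(e_t)-\mathcal{E}(e_t^*)|\le\gamma\|q_t-q_t^*\|$.

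For the remaining part, at $\rho=|e_t|/2B$ we have $|\mathcal{A}(e_t)-\epsilon\mathcal{E}(e_t)|\le \rho d+\epsilon(1-\rho)d\le d$. This is where $\epsilon\le1$ is used.

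\textbf{Step 4 (collecting terms).} The cross term is bounded by
\begin{equation*}
D\bigl(d+\epsilon\gamma\|q_t-q_t^*\|\bigr),
\end{equation*}
which produces the terms $(2B+\eta K_+)d$ and $\frac{(2B+\eta K_+)\epsilon\gamma}{T}\sum_t\|q_t-q_t^*\|$.

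For the gradient term, use $(a+b)^2\le 2a^2+2b^2$ with $a=\nabla\ell_t(q_t)-(\mathcal{A}(e_t)-\epsilon\mathcal{E}(e_t))$ and $b=\epsilon(\mathcal{E}(e_t)-\mathcal{E}(e_t^*))$. Bounding $a^2$ crudely via $|\nabla\ell_t|\le1$ and $|\mathcal{A}-\epsilon\mathcal{E}|\le d$ gives $\frac{\eta}{2}g_t^2\le \eta(1+2d^2)+\ldots$, up to how the constants are grouped. The $b$ part gives $\epsilon^2\gamma^2\eta\|q_t-q_t^*\|^2$, and I will state it with the looser factor $2\epsilon^2\gamma^2\eta$ that appears in the theorem. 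Dividing everything by $T$ then yields the claimed bound.

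\textbf{Main obstacle.} I expect the main difficulty to be Steps 2--3. The diameter bound needs the comparators restricted to $[-B,B]$ and must be reconciled with the one-sided bounds of Proposition~\ref{proposition:1}. The Lipschitz constant $\gamma$ of the gated estimated feedback must also be made precise, since the gate $(1-\rho^*)$ is itself only Lipschitz, with constant $1/2B$, and is nonsmooth at $0$. If the constants do not match exactly, I would absorb the discrepancy into the definitions of $K_+$ and $\gamma$.
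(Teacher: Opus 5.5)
This is essentially the paper's proof. The paper also treats DDCI as inexact subgradient descent with the Euclidean Bregman divergence. It bounds the cross term by the diameter $2B+\eta K_+$ from Proposition~\ref{proposition:1}, and it handles the moving comparator by the same add-and-subtract of $\|q_{t+1}-u_{t+1}\|^2$. It also uses your split of the feedback: your first piece is its term $\bigl[(1+\epsilon)\frac{|e_t|}{2B}-\epsilon\bigr]h(c e_t)$, bounded by $d$ using $\epsilon\le 1$, and your second piece is the sum of its other two terms, bounded by $\epsilon\gamma\|q_t-q_t^*\|$.

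One fix is needed in $\frac{\eta}{2}g_t^2$. With your grouping, $|a|\le 1+d$ gives $\eta(1+d)^2$, which is larger than $\eta(1+2d^2)$ whenever $d<2$ (e.g.\ $\tanh$), so the stated constant does not follow. Instead, split $g_t$ as $\nabla\ell_t(q_t)$ plus the \emph{whole} feedback, as the paper does. This gives $\frac{\eta}{2}g_t^2\le \eta+\eta\bigl(d+\epsilon\gamma\|q_t-q_t^*\|\bigr)^2\le \eta(1+2d^2)+2\eta\epsilon^2\gamma^2\|q_t-q_t^*\|^2$, which is where the factor $2$ comes from.

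Your restriction $u_t\in[-B,B]$ loses no generality. Because $s_t\in[-B,B]$, projecting $u_t$ onto $[-B,B]$ does not increase $\ell_t(u_t)$, and it does not increase the path length either, so the bound for arbitrary $u_t$ follows.
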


where $\gamma = cL_h+\frac{d}{2B}$ and $K_+=1-\alpha+d(1+\epsilon)$. The term $\sum_{t=1}^T \|u_{t+1} - u_t\|$ represents the path length of the optimal comparator sequence and captures the non-stationarity of the environment. The proof is provided in the Appendix~\ref{appendix:regret}.

A critical distinction between DDCI and single feedback methods (e.g., ECI) lies in their sensitivity to the stochastic nature of conformal scores. Typically, when the underlying algorithm is well-trained, the conformal score $s_t$ tends to be highly stochastic, exhibiting significant variance. Existing methods often suffer from updates that are explicitly dependent on the magnitude of observed error $||s_t - q_t||$. Their structure couples the algorithmic performance with the inherent uncertainty of the data. Consequently, even if the algorithm converges to the optimal parameter, high-variance noise in $s_t$ inevitably limits the algorithm's stability and makes it sensitive to outliers. In contrast, our analysis reveals that the stability of DDCI can be controlled by the empirical quantile $q_t^*$. This formulation mitigates the irreducible data noise from the learning process but its effectiveness highly relies on the quality of $q_t^*$. Theorem~\ref{theorem:regret} presents a worst-case adversarial bound containing a non-vanishing penalty $\mathcal{O}(d)$, which is the limitation of introducing the additional feedback. Paying this $\mathcal{O}(d)$ in the worst-case bound is the cost for a strong insurance policy against extreme scenarios. However, this cost would be small since it is linearly related to the steepness parameter $c$.

\begin{remark}
     Specifically, during time intervals where the score distribution remains invariant (i.e., locally exchangeable, which means that the distribution of the current score $s_t$ is consistent with the scores in its preceding window of size $w$ in our case), the empirical $q_t^*$ acts as a theoretically consistent surrogate for the true local measure, and our analysis of the additional feedback mechanism implies that using $q_t^*$ can weaken the influence from the $s_t$. This capability to adapt to local stationarity is formally substantiated by our Localised Static Regret analysis in the Appendix \ref{appendix:LSReg}. Furthermore, under locally exchangeable periods, because $q_t^*$ is an aggregate order statistic computed over a trailing window of size $w$, its asymptotic variance is approximated by: $\text{Var}(q_t^*) \approx \frac{\alpha(1-\alpha)}{w \cdot [f_t(q_t^*)]^2}$ where $f_t(q_t^*)$ is the probability density. This formulation demonstrates that the variance of $q_t^*$ shrinks on the order of $O(1/w)$, making it more stable than using $s_t$ merely (which has an $O(1)$ variance). By considering $q_t^*$, the feedback mechanism can reduce the influence from the irreducible data noise (e.g., outliers), thereby promoting greater stability during the learning process. As $q_t$ converges to $q_t^*$, the difference term in our bound vanishes asymptotically.
     \label{remark:LSReg}
\end{remark}

\subsubsection{Long-run Coverage Analysis}

Given the proposition above, the following theorem shows the worst-case upper bound of \textit{long-run average miscoverage rate}.

\begin{theorem}[]
    \textit{Let $\{\eta_t\}_{t \geq 1}$ be an arbitrary positive sequence. Under Assumption \ref{assumption:1}, \ref{assumption:2} and $\eta_t \leq \frac{2B}{1-\alpha+d(1+\epsilon)}$, the prediction interval generated by Eq.~\ref{formula:DDCI} with learning rate $\eta_t$ satisfies:}
    \vspace{-2mm}
    \begin{equation}
    \begin{aligned}
    |\frac{1}{T}\sum_{t=1}^{T}& (\text{err}_t - \alpha)|\leq \frac{(2B + (1+2d(1+\epsilon))M_{T}) \, \lVert \Delta_{1:T} \rVert_{1}}{T} + \underbrace{(c L_h +  \epsilon \gamma)(2B+K_+M_{T})}_{Bias},
    \end{aligned}
    \end{equation}
    where 
    $\| \Delta_{1:T} \|_1 = |\eta_1^{-1}| + \sum_{t=2}^{T} |\eta_t^{-1} - \eta_{t-1}^{-1}|$, $M_T = \max_{1 \leq r \leq T} \eta_r$, $K_+=1-\alpha+d(1+\epsilon)$, $\gamma = cL_h+\frac{d}{2B}$, $c$ and $d$ are the steepness parameter and the range of the squashing function $h(\cdot)$.
    \label{theorem:coverage}
\end{theorem}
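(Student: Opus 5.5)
The plan is to follow the standard Abel-summation argument for OGD-type coverage bounds (as in \citet{angelopoulos2023pid} and \citet{wu2025error}). We treat the dual-feedback terms as a perturbation of the pure OGD step and control the perturbation by a per-step bias.

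Rearrange Eq.~\ref{formula:DDCI} as
\begin{equation*}
\text{err}_t - \alpha = \frac{q_{t+1}-q_t}{\eta_t} - \bigl(\mathcal{A}(s_t-q_t) - \epsilon\mathcal{E}(s_t-q_t^*)\bigr).
\end{equation*}
Averaging over $t=1,\dots,T$ and applying the triangle inequality splits the miscoverage into two pieces:
\begin{equation*}
\Bigl|\frac{1}{T}\sum_{t=1}^T \frac{q_{t+1}-q_t}{\eta_t}\Bigr| \quad\text{and}\quad \frac{1}{T}\sum_{t=1}^T |\mathcal{A}_t - \epsilon\mathcal{E}_t|.
\end{equation*}

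For the first piece, use summation by parts:
\begin{equation*}
\sum_{t=1}^T \eta_t^{-1}(q_{t+1}-q_t) = \eta_T^{-1}q_{T+1} - \eta_1^{-1}q_1 - \sum_{t=2}^{T}(\eta_t^{-1}-\eta_{t-1}^{-1})q_t .
\end{equation*}
This is bounded by $\max_{1\le t\le T+1}|q_t|\cdot\|\Delta_{1:T}\|_1$, up to the usual handling of the $\eta_T^{-1}$ endpoint. It can be written as a telescoping difference $\sum_t(\eta_t^{-1}-\eta_{t-1}^{-1})(q_t - q_{T+1})$ with $\eta_0^{-1}:=0$, which gives exactly $\|\Delta_{1:T}\|_1$ times a range of $q$. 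Proposition~\ref{proposition:1} then bounds that range. The distance between the upper bound $B+(1-\alpha+d(1+\epsilon))M_T$ and the lower bound $-B-(\alpha+d(1+\epsilon))M_T$ is $2B+(1+2d(1+\epsilon))M_T$, which is precisely the first term of the theorem.

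For the second piece, I would bound the feedback combination pointwise. Write $\mathcal{A}_t - \epsilon\mathcal{E}_t = (\mathcal{A}_t - \epsilon \mathcal{A}_t^*) + \epsilon(\mathcal{A}^*_t - \mathcal{E}_t)$, or simply handle the two terms separately.
\begin{itemize}
\item For $\mathcal{A}$: $|\rho_t h(ce_t)| \le |h(ce_t)-h(0)| \le cL_h|e_t|$, using Assumption~\ref{assumption:2}, oddness of $h$, and $\rho_t\le 1$.
\item For $\epsilon\mathcal{E}$: $|(1-\rho_t^*)h(ce_t^*)|$ is controlled by the Lipschitz property of the map $x\mapsto (1-|x|/2B)h(cx)$. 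Its constant is at most $cL_h + d/(2B) = \gamma$ on $[-2B,2B]$ (product rule: $|(1-\rho)|\le 1$ times $cL_h$, plus $|h|\le d$ times $1/(2B)$). This gives $\epsilon\gamma|e_t^*|$.
\end{itemize}
Finally bound $|e_t|=|s_t-q_t|$ and $|e_t^*|$ by $B+\sup|q_t| \le 2B+K_+M_T$, using Assumption~\ref{assumption:1}, Proposition~\ref{proposition:1}, and $q_t^*\in[-B,B]$. Summing gives the bias $(cL_h+\epsilon\gamma)(2B+K_+M_T)$ after dividing by $T$.

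The main obstacle is making the constants land exactly. I expect the bias step to be the delicate one. The estimated-feedback factor $(1-\rho_t^*)$ is only Lipschitz on the range where $\rho^*_t\le 1$, and in practice $|e_t^*|\le 2B$ holds since both $s_t$ and $q_t^*$ lie in $[-B,B]$, so that is fine. By contrast, $|e_t|$ can exceed $2B$ by $K_+M_T$; for $\mathcal{A}$ I would therefore use only $\rho_t\le1$ and the Lipschitz bound, not the product rule. I would also double-check the endpoint term of the summation by parts: if it leaves an extra $\eta_T^{-1}$ contribution, I would absorb it into $\|\Delta_{1:T}\|_1$ via $\eta_T^{-1}\le \|\Delta_{1:T}\|_1$ after centering $q$ at the midpoint of its admissible interval.
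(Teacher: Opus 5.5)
Your proposal is correct and follows the paper's proof essentially step for step. The paper also writes $\eta_t^{-1}=\sum_{r\le t}\Delta_r$ with $\eta_0^{-1}:=0$ and swaps the sums to obtain $\frac{1}{T}\sum_{r}\Delta_r(q_{T+1}-q_r)$, so your endpoint worry is moot. It then bounds this by the range $2B+(1+2d(1+\epsilon))M_T$ from Proposition~\ref{proposition:1}, and bounds the feedback average pointwise using $|e_t|\le 2B+K_+M_T$. In both versions that step tacitly takes $K_-\le K_+$, i.e.\ $\alpha\le 1/2$.

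The only divergence is in the estimated-feedback piece. The paper reuses the identity decomposition from its regret proof to get $|h(ce_t)|+\epsilon\gamma\|q_t-q_t^*\|$. That is the form its structural-lag-bias remark relies on, but bounding the coefficient $(1+\epsilon)\rho_t-\epsilon$ by $1$ tacitly needs $\epsilon\le 1$. Your direct bound $\epsilon\gamma|e_t^*|$ holds for every $\epsilon>0$, but it does not shrink as $q_t\to q_t^*$. After the worst-case step, both routes give the same constant $(cL_h+\epsilon\gamma)(2B+K_+M_T)$.
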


The first term follows the analysis from \citet{angelopoulos2024online} and this term vanishes asymptotically as $T \to \infty$ as long as the decay of the learning rate is not too fast (e.g., sublinear decay such as $\eta_t \propto t^{-1/2}$). The rest part reveals that the empirical miscoverage is bounded by a constant bias, which is caused by introducing additional feedback terms. A detailed proof is given in the Appendix~\ref{appendix:coverage}. While a non-zero asymptotic bias appears as a theoretical limitation, it is a deliberate and controllable bias-variance trade-off. Specifically, this bias can be decoupled into two distinct components governed by our dual-feedback mechanism, as detailed in the following remark.

\begin{remark}

    The bias in Theorem~\ref{theorem:coverage} is a worst-case quantity derived under adversarial conditions, obtained by bounding all error terms by the domain diameter. In practice, this bound is far from tight. The bias is from two sources and governed by the hyperparameters $c$ and $\epsilon$: (1) $cL_h(2B+K_+M_{T})$: The actual feedback introduces an empirical sum $\frac{1}{T}\sum_{t=1}^{T} |h(c \cdot e_t)|$, which is bounded via the Lipschitz continuity of $h(\cdot)$. This worst-case bound scales linearly with $c$, so selecting a small $c$ directly suppresses this bias term without affecting the qualitative behaviour of the feedback mechanism, as corroborated by the sensitivity analysis in the Appendix~\ref{results:steepness}.
    (2) $\epsilon\gamma\frac{1}{T}\sum_{t=1}^{T} ||q_t - q_t^*||$ (Structural Lag Bias): This term stems from the estimated quantile. During non-stationary distribution shifts, the empirical threshold $q_t^*$ naturally lags behind the instantaneous optimal quantile. However, under locally exchangeable regimes, $q_t^*$ stabilises towards the true optimal quantile. In such periods, this lag penalty collapses into a statistical variance penalty bounded by $\mathcal{O}({\epsilon}/{\sqrt(w)})$.
\end{remark}

\section{Experiments}
\label{sec:experiments}
\subsection{Settings}
\label{sec:exp_settings}

\textbf{Datasets.} Three synthetic datasets (described in the Section~\ref{sec:synthetic})  and five public datasets are used to evaluate methods (Google/Amazon/Microsoft stock price, Delhi temperature and Electricity, described in the Appendix~\ref{app:dataset_description}). Due to the limit of pages, the results and analysis for Delhi temperature and Electricity datasets are given in the Appendix~\ref{app:addition_results}.


\textbf{Underlying Time Series Forecasting Algorithms.} Four algorithms are used: Prophet \citep{taylor2018forecasting}, AutoRegressive (AR) \citep{box2015time}, Theta \citep{assimakopoulos2000theta} and Transformer \citep{vaswani2017attention}. The chosen models span different paradigms of time series forecasting.

\textbf{Online Conformal Baselines.} (1) ACI \citep{gibbs2021adaptive}: updates the intervals via adjusting the quantile threshold to select from calibration set; (2) OGD family \citep{bhatnagar2023improved}: uses an iterative optimization algorithm that updates model parameters incrementally; (3) Conformal PID Control \citep{angelopoulos2023pid}: uses control theory and is capable to model scores to adjust intervals; (4) ECI \citep{wu2025error}:  uses a smoothed quantile loss to update intervals; (5) COP \citep{hu2025distribution}: uses the distribution information to optimise the update.

\textbf{Common Setup.} We set the target miscoverage rate to 10\% and construct asymmetric prediction intervals \citep{romano2019conformalized}. For baselines, the parameter settings (\textit{T\_burnin}, window size, learning rate) are consistent with those described in the open-source implementations \citep{angelopoulos2023pid, wu2025error}. The results and analysis of varying significance levels and learning rates are in Appendix~\ref{results:siglevels} and~\ref{results:learningrates}.

\textbf{DDCI Setup.} The squashing function employed here is $tanh(\frac{1}{2}x)$ and the hyperparamter $\epsilon$ is set to 0.2. For DDCI, the estimated threshold $q_t^*$ is from the original split conformal method which is under the risk of under-coverage in time series settings and the size of the calibration set is equal to \textit{T\_burnin} which is used as a warm-up data in baselines. For DDCI-Nex, the $q_t^*$ follows the design of $NexCP$ \citep{barber2023conformal}, where the weights for conformal scores are $w_i = \frac{0.99^{\,t-i}}{\sum_{j=t-w}^{t-1} 0.99^{\,t-j}}, \quad t-w \leq i \leq t-1$. We also test another squashing function $softsign(\cdot)$ and the results are given in the Appendix~\ref{results:siglevels}. The results and analysis about varying window length $w$ and the squashing function range $d$ and the steepness parameter $c$ are given in the Appendix~\ref{results:window_length} and~\ref{results:function_range} and~\ref{results:steepness}.

\renewcommand{\arraystretch}{1.2}
\begin{table*}[htbp!]
\centering
\caption{Results on the three synthetic datasets generated under the Prophet model with nominal level 10\%. All experiments are repeated 5 times, and the table reports the mean and standard deviation. The best result (shortest width) is marked in bold, and the second-best result is marked with an underline.}
\label{table:synthetic1}
\resizebox{\textwidth}{!}{
\begin{tabular}{cccccccccc}
\toprule
\multirow{2}{*}{\textbf{Method}} 
& \multicolumn{3}{c}{\textbf{Synthetic Data 1}}
& \multicolumn{3}{c}{\textbf{Synthetic Data 2}}
& \multicolumn{3}{c}{\textbf{Synthetic Data 3}}
\\ \cmidrule(lr){2-10}
& \begin{tabular}[c]{@{}c@{}}Coverage\\ (\%)\end{tabular}
& \begin{tabular}[c]{@{}c@{}}Avg.\\ width\end{tabular}
& \begin{tabular}[c]{@{}c@{}}Winkler\\ Score\end{tabular}
& \begin{tabular}[c]{@{}c@{}}Coverage\\ (\%)\end{tabular}
& \begin{tabular}[c]{@{}c@{}}Avg.\\ width\end{tabular}
& \begin{tabular}[c]{@{}c@{}}Winkler\\ Score\end{tabular}
& \begin{tabular}[c]{@{}c@{}}Coverage\\ (\%)\end{tabular}
& \begin{tabular}[c]{@{}c@{}}Avg.\\ width\end{tabular}
& \begin{tabular}[c]{@{}c@{}}Winkler\\ Score\end{tabular}
\\ \midrule

OGD 
& $90.0 \pm 0.12$ & $11.01 \pm 0.50$ & $12.82 \pm 0.54$
& $89.9 \pm 0.17$ & $17.28 \pm 7.59$ & $28.13 \pm 18.54$
& $85.0 \pm 4.78$ & $371.43 \pm 350.57$ & $506.07 \pm 541.8$
\\

SF-OGD
& $90.7 \pm 0.16$ & $75.75 \pm 0.11$ & $84.93 \pm 0.14$
& $90.3 \pm 0.23$ & $76.83 \pm 1.50$ & $86.57 \pm 2.44$
& $89.4 \pm 0.98$ & $183.13 \pm 138.25$ & $213.06 \pm 162.6$
\\

decay-OGD
& $89.7 \pm 0.82$ & $21.34 \pm 2.10$ & $23.32 \pm 2.44$
& $84.3 \pm 4.90$ & $44.05 \pm 34.24$ & $56.04 \pm 47.25$
& $59.6 \pm 22.53$ & $419.60 \pm 307.13$ & $2202.6 \pm 2148.2$
\\

PID
& $89.9 \pm 0.11$ & $13.25 \pm 1.18$ & $15.19 \pm 1.33$
& $89.8 \pm 0.08$ & $18.04 \pm 6.53$ & $25.74 \pm 11.62$
& $89.8 \pm 0.13$ & $124.17 \pm 100.70$ & $144.22 \pm 115.73$
\\

ECI
& $90.0 \pm 0.10$ & $13.07 \pm 1.39$ & $15.07 \pm 1.53$
& $90.0 \pm 0.12$ & $19.95 \pm 7.15$ & $30.58 \pm 11.92$
& $81.9 \pm 17.84$ & $88.85 \pm 54.96$ & $119.52 \pm 61.22$
\\



COP
& $89.8 \pm 0.02$ & $10.07 \pm 1.14$ & $11.86 \pm 1.33$
& $90.0 \pm 0.16$ & $16.69 \pm 7.17$ & $26.53 \pm 11.52$
& $89.7 \pm 0.15$ & $91.25 \pm 51.53$ & $109.43 \pm 60.39$
\\ 
\midrule

\textbf{DDCI}
& $89.9 \pm 0.02$
& \textbf{8.08 $\pm$ 0.82}
& \textbf{9.82 $\pm$ 1.04}
& $90.0 \pm 0.19$
& \textbf{13.15 $\pm$ 5.35}
& \textbf{22.93 $\pm$ 9.78}
& $89.8 \pm 0.10$
& \textbf{73.33 $\pm$ 41.58}
& \textbf{90.00 $\pm$ 49.68}
\\

\textbf{DDCI-Nex}
& $89.9 \pm 0.03$
& \underline{$8.35 \pm 0.84$}
& \underline{$10.07 \pm 1.00$}
& $89.9 \pm 0.14$
& \underline{$15.36 \pm 6.75$}
& \underline{$25.20 \pm 11.18$}
& $89.8 \pm 0.10$
& \underline{$87.10 \pm 51.51$}
& \underline{$104.78 \pm 59.85$}
\\ \bottomrule

\end{tabular}}
\end{table*}

\textbf{Evaluation Metrics.} \emph{Validity}: Average coverage rate on the test set; \emph{Efficiency}: Average width and the Winkler score \citep{winkler1972decision} of prediction intervals on the test set; \emph{Stability}: The mean absolute successive difference (MASD) of interval width. Let $W_t$ denote the interval width at time $t$. MASD is defined as $\operatorname{MASD}_W = \frac{1}{n-1}\sum_{t=2}^{n} \left|W_t-W_{t-1}\right|.$ A smaller $\operatorname{MASD}_W$ indicates less step-to-step oscillation and hence smoother interval updates.

\subsection{Experimental Results}

\subsubsection{Synthetic Datasets}
\label{sec:synthetic}

We constructed three synthetic datasets exhibiting different forms of non-stationarity: (1) \textbf{Random Walk Trend:} a smoothed stochastic trend with time-varying volatility, generated as $X_t = X_{t-1} + \sigma_t \eta_t,\eta_t \sim \mathcal{N}(0,1),$ where the volatility evolves according to $\sigma_t = \sigma_0\bigl(1 + \kappa\,|\sin(2\pi t / P)|\bigr).$ The time-varying term $\sigma_t$ introduces slow periodic volatility modulation, capturing smoother drift and cyclical heteroskedasticity; (2) \textbf{Exponential Trend:} a multiplicative-growth process \(X_t = X_{t-1}\exp(\mu + \sigma \varepsilon_t),\ \varepsilon_t \sim \mathcal{N}(0,1)\); (3) \textbf{Changepoint:} a piecewise log-linear process with two regime shifts. For changepoints \(c_1 = 600\) and \(c_2 = 1200\), $\log X_t = \log X_{t-1} + \mu_{(t)} + \sigma_{(t)}\varepsilon_t, \varepsilon_t \sim \mathcal{N}(0,1),$ where \((\mu_{(t)},\sigma_{(t)})\) vary across segments, producing moderate structural breaks.

The results are averaged over 5 runs under Prophet model and given in the Table~\ref{table:synthetic1}. Across all synthetic settings, DDCI consistently delivers the most efficient and stable intervals while maintaining the target coverage. In Setting 1 and 2, DDCI achieves the narrowest intervals and lowest Winkler scores, with DDCI-Nex providing the second-best performance. Under the more challenging changepoint setting (Setting 3), where all methods experience increased uncertainty, DDCI still produces the most informative intervals, substantially outperforming the baselines. Notably, ECI (actual feedback only) loses the validity. Overall, the results show that DDCI is robust across different forms of non-stationarity and yields clear efficiency gains over existing methods.

\subsubsection{Public Datasets}

The results of baselines can also be found within \citet{wu2025error} and \citet{hu2025distribution}. Overall, all methods achieve coverage rates close to the target level of 90\%.

Table~\ref{table:stock} reports the performance across three stock datasets with a miscoverage rate of $\alpha=10\%$. The baseline comparisons reveal distinct limitations in existing methods. While ACI satisfies the theoretical coverage requirement, it results in infinite interval widths. Similarly, traditional online methods, such as the OGD family and PID, tend to be overly conservative, yielding unnecessarily wide intervals. Although the ECI generally produces narrower intervals than these earlier approaches, its performance exhibits instability. In contrast, the proposed DDCI and DDCI-Nex consistently deliver the most competitive results across all datasets and underlying forecasting algorithms. 
The robustness of DDCI is further highlighted by its ability to maintain tight intervals even under complex architectures like Prophet and Transformer, drastically outperforming the best baselines.

\begin{table*}[h!]
\centering
\renewcommand{\arraystretch}{1.2}
\caption{The experimental results on three datasets (Google, Amazon, Microsoft).}
\label{table:stock}
\resizebox{\textwidth}{!}{%
\begin{tabular}{cccccccccccccc}
\toprule
\multirow{2}{*}{\textbf{Dataset}} & \multirow{2}{*}{\textbf{Method}} 
& \multicolumn{3}{c}{\textbf{Prophet Model}}
& \multicolumn{3}{c}{\textbf{AR Model}}
& \multicolumn{3}{c}{\textbf{Theta Model}}
& \multicolumn{3}{c}{\textbf{Transformer}} \\ \cline{3-14} 
& & \begin{tabular}[c]{@{}c@{}}Coverage\\ (\%)\end{tabular} 
& \begin{tabular}[c]{@{}c@{}}Avg. \\ width\end{tabular} 
& \begin{tabular}[c]{@{}c@{}}Winkler\\ Score\end{tabular} 
& \begin{tabular}[c]{@{}c@{}}Coverage\\ (\%)\end{tabular} 
& \begin{tabular}[c]{@{}c@{}}Avg.\\ width\end{tabular} 
& \begin{tabular}[c]{@{}c@{}}Winkler\\ Score\end{tabular} 
& \begin{tabular}[c]{@{}c@{}}Coverage\\ (\%)\end{tabular} 
& \begin{tabular}[c]{@{}c@{}}Avg.\\ width\end{tabular} 
& \begin{tabular}[c]{@{}c@{}}Winkler\\ Score\end{tabular} 
& \begin{tabular}[c]{@{}c@{}}Coverage\\ (\%)\end{tabular} 
& \begin{tabular}[c]{@{}c@{}}Avg.\\ width\end{tabular} 
& \begin{tabular}[c]{@{}c@{}}Winkler\\ Score\end{tabular} \\ 
\midrule

\multirow{9}{*}{\textbf{Google}}
& ACI
& 90.0 & $\infty$ & $\infty$
& 89.8 & $\infty$ & $\infty$
& 90.5 & $\infty$ & $\infty$
& 90.2 & $\infty$ & $\infty$ \\

& OGD
& 89.7 & 57.60 & 74.74
& 90.7 & 33.76 & 70.96
& 89.9 & 31.49 & 47.88
& 90.1 & 109.27 & 148.76 \\

& SF-OGD
& 89.6 & 58.92 & 77.52
& 89.9 & 28.31 & \textbf{41.93}
& 90.0 & 34.04 & 51.01
& 90.1 & 88.30 & 114.34 \\

& decay-OGD
& 89.9 & 77.23 & 92.36
& 90.2 & 46.54 & \underline{58.27}
& 90.2 & 55.33 & 67.89
& 89.9 & 120.25 & 144.38 \\

& PID
& 89.9 & 56.13 & 71.18
& 89.8 & 64.19 & 83.58
& 89.3 & 75.71 & 97.37
& 89.8 & 77.36 & 106.78 \\

& ECI
& 89.9 & 56.07 & 75.46
& 89.7 & 19.96 & 69.11
& 89.6 & 30.93 & 47.64
& 89.9 & 70.93 & 110.29 \\

& COP
& 89.7 & 49.72 & 63.75
& 89.6 & \textbf{19.87} & 69.10
& 89.3 & 30.25 & 47.09
& 89.7 & 61.93 & 99.47 \\
\cmidrule{2-14}

& \textbf{DDCI}
& 89.8 & \textbf{39.20} & \textbf{52.80}
& 89.6 & \underline{19.92} & 68.97
& 89.4 & \underline{29.24} & \textbf{46.38}
& 90.0 & \underline{51.18} & \underline{86.68} \\

& \textbf{DDCI-Nex}
& 89.8 & \underline{39.69} & \underline{53.30}
& 89.6 & 19.93 & 68.97
& 89.3 & \textbf{29.15} & \underline{46.57}
& 89.9 & \textbf{51.03} & \textbf{86.51} \\

\midrule

\multirow{9}{*}{\textbf{Amazon}}
& ACI
& 90.2 & $\infty$ & $\infty$
& 89.8 & $\infty$ & $\infty$
& 89.7 & $\infty$ & $\infty$
& 90.1 & $\infty$ & $\infty$ \\

& OGD
& 89.6 & 55.15 & 67.57
& 89.9 & 19.10 & \textbf{30.08}
& 89.8 & 18.07 & 28.23
& 89.4 & 52.68 & 69.27 \\

& SF-OGD
& 89.5 & 61.47 & 73.94
& 89.9 & 24.44 & 34.31
& 90.0 & 23.88 & 34.46
& 89.3 & 56.56 & 72.48 \\

& decay-OGD
& 89.9 & 97.22 & 107.79
& 89.7 & 20.23 & \underline{30.74}
& 89.2 & 17.49 & 28.56
& 89.8 & 93.16 & 109.19 \\

& PID
& 89.9 & 45.90 & 56.02
& 89.7 & 49.40 & 62.72
& 88.6 & 71.11 & 87.88
& 89.8 & 56.05 & 71.10 \\

& ECI
& 90.1 & 47.00 & 59.74
& 89.5 & 17.12 & 34.23
& 89.7 & 17.46 & \underline{27.86}
& 89.9 & 49.07 & 68.77 \\

& COP
& 89.6 & 39.86 & 49.40
& 89.5 & 17.09 & 34.14
& 89.6 & 17.21 & \textbf{27.66}
& 89.5 & 42.73 & 61.32 \\
\cmidrule{2-14}

& \textbf{DDCI}
& 89.7 & \textbf{30.87} & \underline{41.06}
& 89.3 & \textbf{17.04} & 34.30
& 89.5 & \textbf{17.05} & 27.90
& 89.7 & \textbf{34.46} & \textbf{51.61} \\

& \textbf{DDCI-Nex}
& 89.7 & \underline{30.92} & \textbf{40.95}
& 89.3 & \underline{17.06} & 34.29
& 89.6 & \underline{17.06} & 27.87
& 89.7 & \underline{35.16} & \underline{52.25} \\

\midrule

\multirow{9}{*}{\textbf{Microsoft}}
& ACI
& 90.0 & $\infty$ & $\infty$
& 90.7 & $\infty$ & $\infty$
& 89.9 & $\infty$ & $\infty$
& 89.9 & $\infty$ & $\infty$ \\

& OGD
& 89.9 & 3.77 & 5.24
& 90.7 & 4.37 & 7.54
& 89.9 & 2.49 & \textbf{3.78}
& 89.9 & 3.80 & 6.98 \\

& SF-OGD
& 90.2 & 6.91 & 8.07
& 90.0 & 12.62 & 14.48
& 90.2 & 6.98 & 8.23
& 89.9 & 7.08 & 8.61 \\

& decay-OGD
& 90.1 & 6.02 & 7.30
& 90.2 & 7.97 & 8.94
& 90.2 & 4.92 & 6.00
& 89.7 & 5.41 & 7.05 \\

& PID
& 90.0 & 6.30 & 7.66
& 89.6 & 5.60 & \textbf{6.96}
& 89.8 & 4.89 & 6.41
& 89.9 & 6.16 & 8.13 \\

& ECI
& 90.4 & 4.85 & 6.14
& 90.2 & 3.76 & 8.50
& 90.2 & 2.48 & 3.94
& 89.9 & 4.35 & 7.45 \\

& COP
& 89.7 & 3.78 & 5.16
& 89.7 & \textbf{2.66} & \underline{7.38}
& 89.4 & \textbf{2.38} & \underline{3.81}
& 89.6 & 3.77 & 6.89 \\
\cmidrule{2-14}

& \textbf{DDCI}
& 89.8 & \textbf{3.26} & \textbf{4.55}
& 89.8 & 2.91 & 7.72
& 89.8 & \underline{2.39} & 3.82
& 89.9 & \textbf{3.34} & \textbf{6.43} \\

& \textbf{DDCI-Nex}
& 89.8 & \underline{3.40} & \underline{4.71}
& 89.8 & \underline{2.90} & 7.67
& 89.8 & 2.39 & 3.84
& 89.8 & \underline{3.47} & \underline{6.51} \\

\bottomrule
\end{tabular}}
\end{table*}

Fig.~\ref{fig:GOOGL} presents a visualisation of comparison result between ECI (single feedback only) and DDCI (ours) on the Google stock dataset using the Prophet model. Generally, ECI tends to produce wider ranges, which often appears overly conservative, with noticeable fluctuations in the interval boundaries. By contrast, DDCI yields narrower and more stable intervals, reducing redundancy while maintaining strong alignment with the true series.

\begin{figure*}[b]
    \centering
    \includegraphics[width=0.7\linewidth]{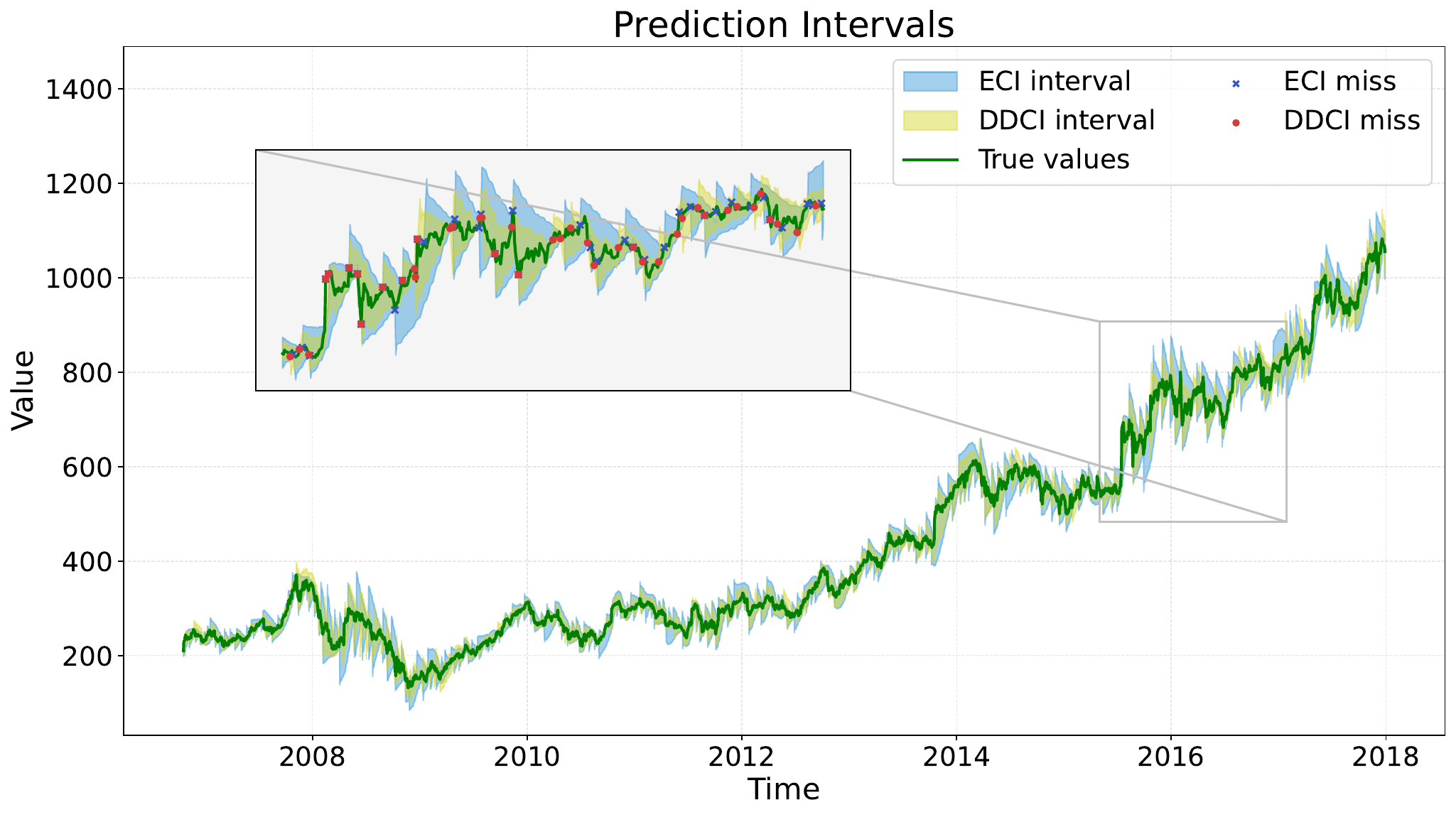}
    \caption{Visualisation of comparison results between ECI (single feedback only) and DDCI (proposed, $\epsilon=0.2$) on Google stock dataset under Prophet model. The blue crosses and red points represent the miscovered points of each method.}
    \label{fig:GOOGL}
\end{figure*}

In conclusion, the proposed methods generally achieve tighter prediction intervals while maintaining validity. However, their improvements are less pronounced under the AR model. These observations can be attributed to the behaviour of the conformal estimated threshold $q_t^{*}$. When the conformal prediction yields relatively conservative intervals, characterised by achieving the target coverage with a large average width, the estimated threshold tends to overestimate local uncertainty. 

\subsubsection{Stability Evaluation and Ablation Study}
\label{sec:stability}

We show \emph{stability evaluation} on all five public datasets under Prophet in Tab.~\ref{tab:stability}. We compare PID and ECI with three configurations of DDCI. DDCI with $c=0$ disables both additional feedback terms because $h(0)=0$, and is therefore equivalent to OGD under the same proportional learning rate. DDCI with $\epsilon=0$ retains only the actual feedback $\mathcal{A}(e_t)$, whereas the full DDCI combines the actual and estimated feedbacks.

The full DDCI achieves the lowest MASD on all five datasets. Relative to the actual-feedback-only configuration, adding the estimated feedback reduces MASD by $2.3\%$--$33.5\%$. The actual feedback alone is not uniformly smoother than the $c=0$ configuration, whereas combining the actual and estimated feedbacks produces the lowest successive interval variation in every setting. Across these ablations, adding the estimated feedback changes coverage by at most $0.1$ percentage points and mean width by at most $5.5\%$. Together with the coverage and efficiency results in Tables~\ref{table:stock} and~\ref{res_merged_temp_elec}, this shows that the stability improvement is not obtained through a material loss of validity or wider intervals. All results under all models are provided in Tab.~\ref{tab:additional_stability} (Appendix~\ref{visual:epsilon}), where DDCI achieves the lowest MASD in 19 of the 20 dataset--forecaster settings.

\begin{table*}[]
\centering
\small
\caption{Stability evaluation (MASD, $\downarrow$) and ablation study on the five public datasets under Prophet.}
\label{tab:stability}
\begin{tabular}{lccccc}
\toprule
Method & GOOGL & AMZN & MSFT & Delhi & Electricity \\
\midrule
PID
& 8.55 & 6.94 & 1.02 & 1.05 & 0.0645 \\

ECI
& 9.66 & 7.78 & 0.81 & 0.43 & 0.0628 \\

DDCI ($c=0$; no additional feedback)
& \underline{8.32}
& \underline{6.64}
& \underline{0.63}
& 0.32
& 0.0622 \\

DDCI ($\epsilon=0$; actual feedback only)
& 9.93
& 8.05
& 0.68
& \underline{0.31}
& \underline{0.0617} \\

DDCI ($\epsilon=0.2$; dual feedback)
& \textbf{6.89}
& \textbf{5.35}
& \textbf{0.54}
& \textbf{0.28}
& \textbf{0.0603} \\
\bottomrule
\end{tabular}
\end{table*}

\subsubsection{Sensitivity Analysis}
We examine the role of the parameter $\epsilon$ in the update rule Eq.~\ref{formula:DDCI}, taking Google stock data as an illustrative case. Fig.~\ref{fig:ablation_size} shows that increasing the parameter $\epsilon$ consistently enlarges the average interval size across all four models, confirming its role in controlling the balance between efficiency and smoothness. 
Noticeably, the influence of $\epsilon$ becomes remarkable when it is greater than 0.5. 
The observed robustness to small $\epsilon$ aligns perfectly with our theoretical regret bound. 
Consequently, for small values, the theoretical cost on the regret bound is minimal due to this quadratic scaling. 
This provides a strong justification for our parameter choice: keeping $\epsilon$ small allows the model to benefit from the stability mechanism while incurring a negligible penalty on the dynamic regret.

DDCI-Nex yields less sensitivity to $\epsilon$ by employing NexCP for threshold estimation. This shows the influence of estimated threshold $q_t^*$ on the updating process. This highlights that a more reliable estimated threshold is crucial for enabling the dual-feedback mechanism to function effectively. The apparent sensitivity shown in Fig.~\ref{fig:ablation_size} is not primarily driven by~$\epsilon$ itself. The different trends across panels arise because each base model (AR, Prophet, Theta, Transformer) employs a distinct learning rate (0.05, 0.5, 0.1, 0.5, respectively). A visualisation of how $\epsilon$ can influence the predictive efficiency are displayed in the Appendix~\ref{visual:epsilon}. In this example, we set $\epsilon$ to 0 and 0.8 respectively, which can be regarded as a visualisation for ablation study. Fig.~\ref{fig:ablation_coverage} displays that the effect of the parameter $\epsilon$ is relatively minor compared to its influence on interval size. 
This indicates that varying $\epsilon$ mainly adjusts efficiency (interval width) without substantially undermining validity.

\begin{figure*}[]
    \centering
    \includegraphics[width=0.95\linewidth]{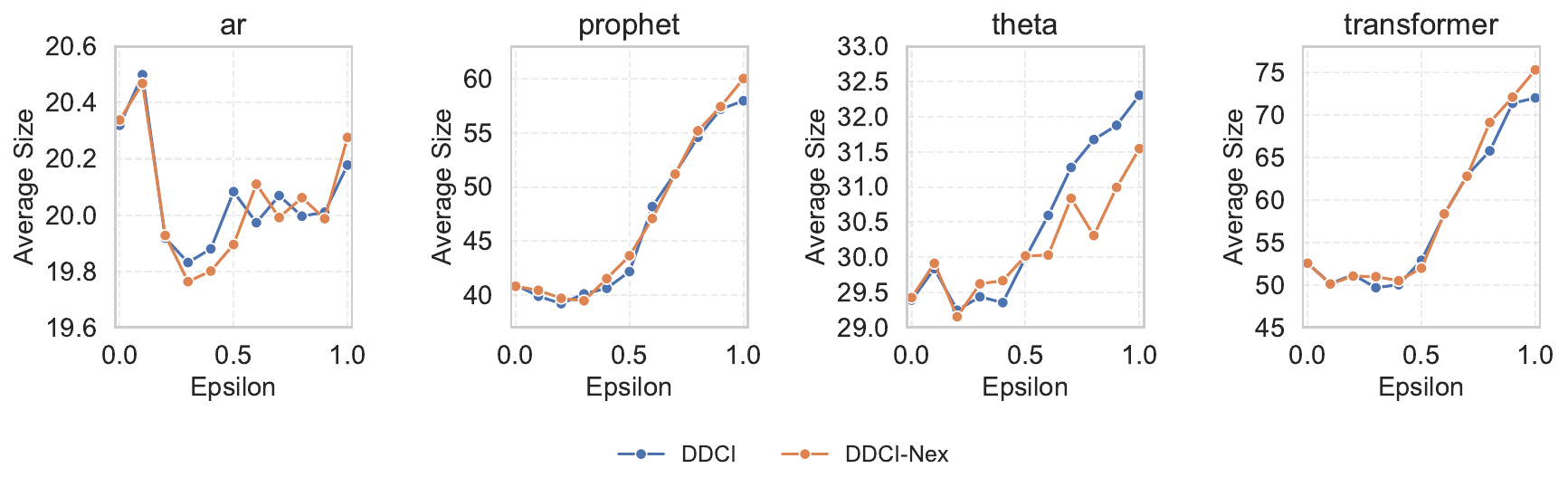}
    \caption{The influence of different smoothing parameter $\epsilon$ on the size of prediction interval.}
    \label{fig:ablation_size}
\end{figure*}
\begin{figure*}[]
    \centering
    \includegraphics[width=0.95\linewidth]{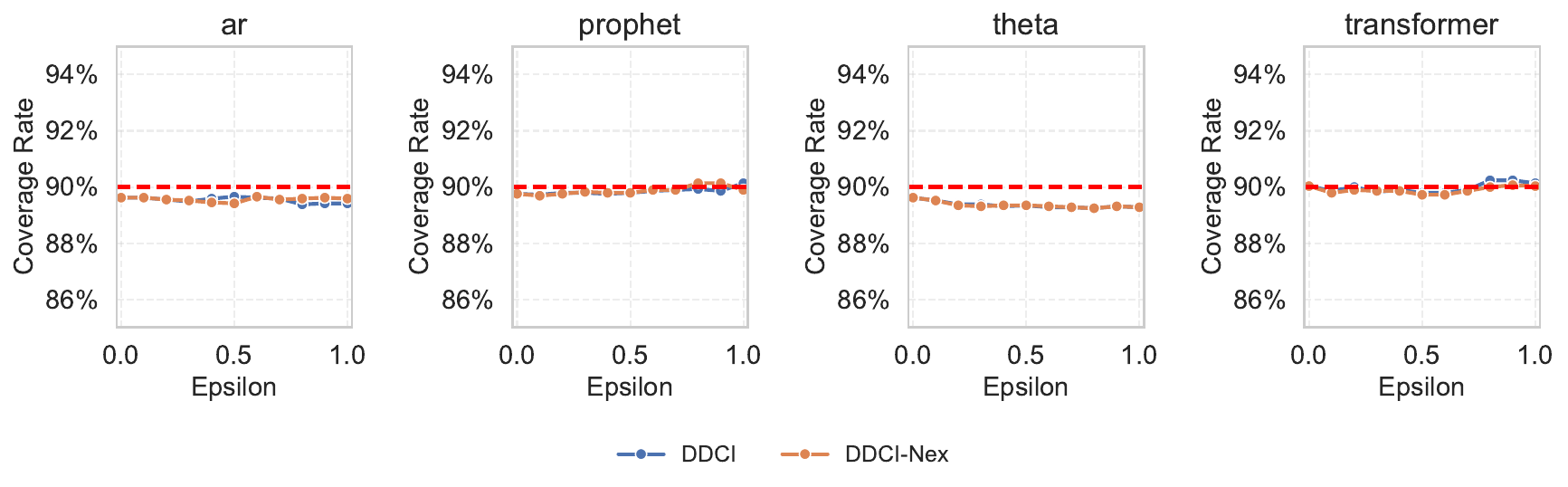}
    \caption{The influence of different smoothing parameter $\epsilon$ on the coverage rate.}
    \label{fig:ablation_coverage}
\end{figure*}

\section{Conclusion}
This work introduces a novel framework that incorporates a dual-feedback mechanism. The mechanism combines \textit{actual feedback} and \textit{estimated feedback}, whose signs are intentionally set in opposition to counterbalance increases and decreases when updating the intervals. While the use of \textit{estimated feedback} may slightly widen the intervals when the true label lies inside them, it effectively prevents unnecessary expansions, leading to more stable interval adjustments. Further work could improve the DDCI method by adaptively selecting the parameter $\epsilon$ in response to distributional changes, thereby enhancing both stability and efficiency.

\bibliography{ref}
\bibliographystyle{abbrvnat}

\newpage

\appendix

\section{Theoretical Analysis}
\label{app:ta}

Here, we provide the proofs for theoretical analysis in Section~\ref{mt:TA}. The proofs for  Proposition are given in the \ref{appendix:propositions}. The proofs for regret bound analysis are in the \ref{appendix:regret} and \ref{appendix:LSReg}. The proof for long-run coverage analysis is provided in the \ref{appendix:coverage}. The analysis for the additional parameter $\epsilon$ is in the \ref{appendix:epsilon}.

Recall the update rule:

\begin{equation*}
    \begin{aligned}
    q_{t+1} := q_t + \eta_t(\text{err}_t - \alpha) 
+ \eta_t ( 
    \underbrace{\rho_th(c \cdot e_t)}_{\text{Actual feedback}} -
    \underbrace{\epsilon\left(1 - \rho_t^*\right) h(c \cdot e_t^*)}_{\text{Estimated feedback}}
), \ \ \ \ \rho_t, \rho_t^*\in [0, 1]
    \end{aligned}
    \label{formula:DDCI_appendix}
\end{equation*}
where $\epsilon > 0$ is a parameter to regulate the influence of the estimated feedback term. The actual and estimated feedbacks operate in opposing directions to establish a push-pull mechanism, ensuring dynamic equilibrium. In other words, the actual feedback acts as a driving force to correct immediately, while the estimated feedback serves as a stabilising force to prevent over-adjustment. For the theoretical analysis, we replace $\max_{t-w \leq i,j \leq t} \, \lvert s_i - s_j \rvert$ in $\rho_t$ and $\rho_t^*$ with its upper bound 2B according to the Assumption~\ref{assumption:1}. This yields a conservative bound; in practice $\max_{t-w \leq i,j \leq t} \, \lvert s_i - s_j \rvert$ may be substantially smaller, which explains the empirical efficiency of DDCI.

\subsection{Propositions \ref{proposition:1}}
\label{appendix:propositions}

\textit{Proof of Proposition~\ref{proposition:1}.} We prove this by induction, following the structure of similar arguments in prior work \citep{angelopoulos2024online, wu2025error}. First, $q_1 \in [-B, B]$ by assumption. Assume that $\eta_t \leq \frac{2B}{1-\alpha+d(1+\epsilon)}$ and $\epsilon \ge 0$. Next fix any $t \ge 1$. Meanwhile, the empirical quantile satisfies $|q_t^*|\leq B$. Then we analyse the update in four cases based on the relative positions of $s_t$, $q_t$, and $q^*_t$.

\medskip
(1): $s_t \ge q_t$ (Interval expands, error = $1 - \alpha$)
\begin{itemize}
    \item $q_t^* \ge s_t \ge q_t$:
    \begin{align*}
        q_{t+1} &= q_t + \eta_t(1 - \alpha) + \eta_t\left(\rho_th(c(s_t - q_t)) - \epsilon\left(1 - \rho_t^*\right)h(c(s_t - q_t^*))\right) \\
        &\le q_t + \eta_t(1 - \alpha) + d\eta_t(1 + \epsilon) \\
        &\le B + (1 - \alpha + d(1 + \epsilon))M_t
    \end{align*}

    \item $s_t \ge q_t \ge q_t^*$ or $s_t \ge q_t^* \ge q_t$:
    \begin{align*}
        q_{t+1} &= q_t + \eta_t(1 - \alpha) + \eta_t\left(\rho_th(c(s_t - q_t)) - \epsilon\left(1 - \rho_t^*\right)h(c(s_t - q_t^*))\right) \\
        &\le q_t + \eta_t(1 - \alpha) + d(1+\epsilon)\eta_t \\
        &\le B + (1 - \alpha +d(1+\epsilon))M_t
    \end{align*}
\end{itemize}

\medskip
(2): $s_t \le q_t$ (Interval shrinks, error = $-\alpha$)
\begin{itemize}
    \item $q_t^* \le s_t \le q_t$:
    \begin{align*}
        q_{t+1} &= q_t + \eta_t(-\alpha) + \eta_t\left(\rho_th(c(s_t - q_t)) - \epsilon\left(1 - \rho_t^*\right)h(c(s_t - q_t^*))\right) \\
        &\ge q_t + \eta_t(-\alpha) - d\eta_t(1 + \epsilon) \\
        &\ge -B - (\alpha + d(1+\epsilon))M_t
    \end{align*}

    \item $s_t \le q_t \le q_t^*$ or $s_t \le q_t^* \le q_t$:
    \begin{align*}
        q_{t+1} &= q_t + \eta_t(-\alpha) + \eta_t\left(\rho_th(c(s_t - q_t)) - \epsilon\left(1 - \rho_t^*\right)h(c(s_t - q_t^*))\right) \\
        &\ge q_t + \eta_t(-\alpha) - d\eta_t(1+\epsilon) \\
        &\ge -B - (\alpha + d(1+\epsilon))M_t
    \end{align*}
\end{itemize}

Then combining all cases gives:

\begin{equation*}
    -B - K_-M_{t-1} \le q_t \le B + K_+M_{t-1} \quad \forall t \ge 1,
\end{equation*}

\text{where } $M_{0}$ = 0, $M_{t} = \max_{1 \leq r \leq t} \eta_{r} \quad \text{for } t \geq 1, K_-=\alpha + d(1+\epsilon) \ \text{and}\ K_+=1 - \alpha +d(1+\epsilon).$

\paragraph{Remark on the inductive bound.} The bound 
$|q_t| \leq B + K_+ M_{t-1}$ (due to $\alpha$ is usually small, we use $K_+$ as the upper bound of the absolute value) leverages the structural property that the maximum learning rate $M_{t-1}$ acts as a buffer: once $q_t$ approaches $\pm B$, the self-correcting nature of the dual-feedback signals (when $q_t$ exceeds $B$, the actual feedback $\mathcal{A(\cdot)}$ becomes negative; symmetric for $q_t < -B$) prevents unbounded drift. The constraint $\eta_t~\leq~\frac{2B}{K_+}$  ensures that no single update can move $q_t$ across the entire range $2B$.

\subsection{Dynamic Regret for Theorem~\ref{theorem:regret}}
\label{appendix:regret}

To rigorously analyse the dynamic regret of DDCI, we cast the algorithm within the framework of Online Mirror Descent (OMD) \citep{orabona2019modern}. Let $Br_{\psi}(\cdot,\cdot)$ be the Bregman divergence with respect to $\psi(x): x\in \mathcal{R}$ and assume $\psi$ to be closed and $\lambda$-strongly convex. Specifically, we define: $g_t = g_t^{OGD} + g_t^{AF} - \epsilon g_t^{EF}$. Here $g_t^{OGD} \in \nabla \ell_t(q_t)$ is a subgradient of the pinball loss. The algorithm thus performs inexact subgradient descent: instead of using $g_t^{OGD}$ directly, it uses the perturbed direction $g_t = g_t^{OGD} + g_t^{AF} - \epsilon g_t^{EF}$, where $g_t^{AF} - \epsilon g_t^{EF}$ is the bias. We use Euclidean distance  $Br_\psi(q, q_t) = \frac{1}{2}\|q - q_t\|^2$, which is the Bregman divergence induced by $\psi(q)=\frac{1}{2}||q||^2$.

\textit{Proof of Theorem~\ref{theorem:regret}.} 
{
From the one-step inequality for OMD with competitor $u_t$ and fixed learning rate $\eta$, we have
\begin{align*}
    \eta(\ell_t(q_t) - \ell_t(u_t))
    &\le \eta g_t^{OGD}(q_t-u_t) = \eta (g_t - (g_t^{AF}-\epsilon g_t^{EF}))(q_t-u_t) \\
    &= \eta g_t(q_t-u_t) - \eta (g_t^{AF}- \epsilon g_t^{EF})(q_t-u_t) \\
\end{align*}
Applying the standard OMD analysis to the first term, we bound it using the Bregman divergence $Br_\psi$:
\begin{equation*}
\eta g_t(q_t - u_t) \le Br_\psi(u_t, q_t) - Br_\psi(u_t, q_{t+1}) + \frac{\eta^2}{2}||g_t||^2
\end{equation*}

For the second term, by Cauchy-Schwarz and the bounded domain Assumption~\ref{assumption:1} ($||q_t - u_t|| \le 2B+K_+M_{t-1}$), we have:
\begin{equation*}
- \eta (g_t^{AF}- \epsilon g_t^{EF})(q_t - u_t) \le \eta ||g_t^{AF}- \epsilon g_t^{EF}|| \cdot ||q_t - u_t|| \le \eta (2B+K_+M_{t-1}) ||g_t^{AF}- \epsilon g_t^{EF}||
\end{equation*}

Combining these, we obtain:
\begin{align*}
\eta(\ell_t(q_t) - \ell_t(u_t)) \le & Br_\psi(u_t, q_t) - Br_\psi(u_{t+1}, q_{t+1}) + Br_\psi(u_{t+1}, q_{t+1}) - Br_\psi(u_t, q_{t+1}) \\
&+ \frac{\eta^2}{2}||g_t||^2 + \eta (2B+\eta K_+) ||g_t^{AF}- \epsilon g_t^{EF}||
\end{align*}

Dividing by $\eta$ and averaging over time $T$, we have:
\begin{align*}
\frac{1}{T}\sum_{t=1}^T (\ell_t(q_t) - \ell_t(u_t)) &\le \frac{2B^2}{\eta T} + \frac{1}{\eta T}\sum_{t=1}^T \langle \nabla\psi(q_{t+1}) - \nabla\psi(q_1), u_t - u_{t+1} \rangle \\ & + \frac{1}{T}\sum_{t=1}^T\frac{\eta}{2} ||g_t||^2 + \frac{2B+\eta K_+}{T}\sum_{t=1}^T ||g_t^{AF} - \epsilon g_t^{EF}|| \\
&\le \frac{2B^2}{\eta T} + \frac{2B+ \eta K_+}{\eta T}\sum_{t=1}^T ||u_{t+1} - u_t|| \\
&+ \frac{1}{T}\sum_{t=1}^T\frac{\eta}{2} ||g_t||^2 + \frac{2B+\eta K_+}{T}\sum_{t=1}^T ||g_t^{AF} - \epsilon g_t^{EF}||
\end{align*}

Now we analyse the term $g_t$:

(1) \textbf{The Gradient Descent ($g_t^{OGD}$):} The standard descent direction derived from the pinball loss, representing the fundamental learning signal: $||g_t^{OGD}|| = ||\nabla \ell_t(q_t)|| \leq 1;$

(2) \textbf{Stabilised Control ($g_t^{AF}-\epsilon g_t^{EF}$):} A correction term governed by the dual feedback mechanism. It dynamically ``re-calibrates" the update magnitude by referencing the estimated threshold, ensuring stability in stationary regimes and responsiveness during drifts.
\begin{align*}
    g_t^{AF}-\epsilon g_t^{EF}
    &= \frac{|e_t|}{2B}h(c \cdot e_t) - \epsilon (1-\frac{|e_t^*|}{2B})h(c \cdot e_t^*) \\
    &=\left[ (1+\epsilon)\frac{|e_t|}{2B} - \epsilon \right] h(c \cdot e_t) - \epsilon \left( \frac{|e_t| - |e_t^*|}{2B} \right) h(c \cdot e_t) + \epsilon \left(1 - \frac{|e_t^*|}{2B}\right) \Big[ h(c \cdot e_t) - h(c \cdot e_t^*) \Big]\\
\end{align*}

The identity decomposition above serves not only to facilitate mathematical bounding but also to elucidate how the dual-feedback mechanism achieves an adaptive toggle between smoothness and responsiveness at a fundamental level.

Specifically, the first term in the decomposition acts as the primary control term for the system update. A closer inspection of its coefficient, $\left[ (1+\epsilon)\frac{|e_t|}{2B} - \epsilon \right]$, reveals that $\epsilon \leq 1$ and it essentially functions as an adaptive gating switch:

\begin{itemize}
    \item Locally exchangeable Regimes: When the instantaneous prediction error $|e_t|$ is small (i.e., $|e_t| < \frac{\epsilon}{1+\epsilon} 2B$), this coefficient becomes negative. This implies that the overall update direction actively opposes the native actual feedback $h(c \cdot e_t)$, allowing the estimated feedback (anchored by $q_t^*$) to dominate. This mechanism forcefully suppresses overreactions to single-step data noise, providing a dynamic dampening effect that ensures interval stability;
    \item Drift Regimes: When encountering a severe distribution shift that significantly inflates the instantaneous error $|e_t|$, the coefficient turns positive, and the dampening effect $\epsilon(1 - \frac{|e_t|}{2B})$ decays rapidly. Consequently, the actual feedback regains dominance, driving the threshold $q_t$ to rapidly expand or contract to capture the new underlying data distribution.
\end{itemize}

Meanwhile, the remaining two terms in the decomposition perfectly isolate the structural bias introduced by the empirical quantile. By leveraging the triangle inequality and the Lipschitz continuity of $h(\cdot)$, these terms can be cleanly bounded proportionally to $||q_t - q_t^*||$. This naturally lays the groundwork for the $\gamma$-dependent worst-case bound derived below.

Then combining boundedness and Lipschitz continuity, we have the worst-case bound $||g_t^{AF} - \epsilon g_t^{EF}|| \le d + \epsilon \gamma ||q_t - q_t^*||$, where $\gamma = c L_h + \frac{d}{2B}$. The constant $d$ here comes from $\left[ (1+\epsilon)\frac{|e_t|}{2B} - \epsilon \right] h(c \cdot e_t)$ when $e_t$ always reaches the maximum in the worst-case. However, this term is linearly correlated to the steepness parameter $c$ (Lipschitz continuity), which can be a small value without influence the efficiency based on the sensitivity analysis on $c$ (Appendix~\ref{results:steepness}).


Recalling the inequality $||g_t||^2 = ||g_t^{OGD} + (g_t^{AF} - \epsilon g_t^{EF})||^2 \le 2||g_t^{OGD}||^2 + 2||g_t^{AF} - \epsilon g_t^{EF}||^2$, we substitute $||g_t^{AF} - \epsilon g_t^{EF}||$ and $||g_t||^2$ back into the regret bound derived in the previous step. Then we obtain the final dynamic regret:

\begin{align*}
    \frac{1}{T}\sum_{t=1}^T (\ell_t(q_t) - \ell_t(u_t)) \le& \frac{2B^2}{\eta T} + \frac{2B+ \eta K_+}{\eta T}\sum_{t=1}^T ||u_{t+1} - u_t|| + 2\epsilon^2 \gamma^2\frac{\eta}{T}\sum_{t=1}^T ||q_t - q_t^*||^2 \\
    & + \frac{(2B+\eta K_+) \epsilon \gamma}{T}\sum_{t=1}^T ||q_t - q_t^*||+ (2B+\eta K_+)d + \eta(1+2d^2).
\end{align*}

The term $\sum_{t=1}^T \|u_{t+1} - u_t\|$ represents the path length of the optimal comparator sequence. This term captures the non-stationarity of the environment.
}

\subsection{Localised Static Regret for Remark~\ref{remark:LSReg}}
\label{appendix:LSReg}
Then, we start showing the Locally Static Regret (LSReg) analysis of DDCI. During the time interval $\tau$ where the score distribution remains invariant (i.e., locally exchangeable), the $q_t^*$ from conformal prediction reduces to a fixed optimal solution $u^*$. This supports the stabilising role of $q_t^*$ as an empirical anchor.

\textbf{Corollary A.1.} \textit{Let $\tau$ be a time interval of length $k$ and assume the score distribution remains invariant within this interval. For a fixed optimal decision $u^*$ within $\tau$, based on the inexact gradient analysis in Theorem \ref{theorem:regret}, DDCI achieves the following local regret bound:}
\begin{align*}
    \frac{1}{k} \sum_{t=\tau}^{\tau+k-1} (\ell_t(q_t) - \ell_t(u^*)) \le& \frac{(2B+\eta K_+)^2}{2\eta k}+ \frac{\eta}{k}\sum_{t=\tau}^{\tau+k-1} \Big( 2\epsilon^2 \gamma^2 ||q_t - q_t^*||^2 \Big)  + (2B+\eta K_+)d  \\
    & + \frac{(2B+\eta K_+) \epsilon \gamma}{k}\sum_{t=\tau}^{\tau+k-1} ||q_t - q_t^*||+ \eta(1+2d^2).
\end{align*}

\textit{Proof of Corollary A.1.}
For a fixed $u^*$ within the time interval $\tau$, we follow the inexact gradient decomposition $g_t = g_t^{OGD} + g_t^{AF} - \epsilon g_t^{EF}$ as established in the proof of Theorem \ref{theorem:regret}. 

Starting from the one-step inequality and summing over $t = \tau$ to $\tau+k-1$, we have:
\begin{align*}
    \frac{1}{k} \sum_{t=\tau}^{\tau+k-1} (\ell_t(q_t) - \ell_t(u^*)) \le & \frac{1}{\eta k} \sum_{t=\tau}^{\tau+k-1} \frac{||q_t - u^*||^2 - ||q_{t+1} - u^*||^2}{2} \\
    &+ \frac{1}{k} \sum_{t=\tau}^{\tau+k-1} \frac{\eta}{2} ||g_t||^2 + \frac{1}{k} \sum_{t=\tau}^{\tau+k-1} (2B+\eta K_+) ||g_t^{AF}-\epsilon g_t^{EF}||
\end{align*}

Because the competitor $u^*$ is fixed, the first term forms a perfect telescoping sum. Using the bounded domain assumption $||q_\tau - u^*|| \le 2B+K_+M_{t-1}$, we bound the telescoping term as:
\begin{equation*}
    \frac{||q_\tau - u^*||^2 - ||q_{\tau+k} - u^*||^2}{2\eta k} \le \frac{||q_\tau - u^*||^2}{2\eta k} \le \frac{(2B+\eta K_+)^2}{2\eta k}
\end{equation*}

Notice that the dynamic path length term $\sum ||u_{t+1} - u_t||$ vanishes completely since $u_{t+1} = u_t = u^*$. 

Finally, substituting the bound $||g_t^{AF}-\epsilon g_t^{EF}|| \le d + \epsilon \gamma ||q_t - q_t^*||$ into the inequality, we can get the desired local regret bound:
\begin{align*}
    \frac{1}{k} \sum_{t=\tau}^{\tau+k-1} (\ell_t(q_t) - \ell_t(u^*)) \le& \frac{(2B+\eta K_+)^2}{2\eta k}+ \frac{\eta}{k}\sum_{t=\tau}^{\tau+k-1} \Big( 2\epsilon^2 \gamma^2 ||q_t - q_t^*||^2 \Big) + \eta(1+2d^2) \\
    & + \frac{(2B+\eta K_+) \epsilon \gamma}{k}\sum_{t=\tau}^{\tau+k-1} ||q_t - q_t^*|| + (2B+\eta K_+)d.
\end{align*}
\hfill

\subsection{Coverage Analysis for Theorem~\ref{theorem:coverage}}
\label{appendix:coverage}

    
\textit{Proof of Theorem~\ref{theorem:coverage}. Define $\Delta_r :=\eta_r^{-1}-\eta_{r-1}^{-1}$ for $r \geq 1$, with the convention $\eta_0^{-1}:=0$. Then we can get the long-run coverage bound:} 
{
\footnotesize
\begin{align*}
\left| \frac{1}{T} \sum_{t=1}^T (\text{err}_t - \alpha) \right|
&= \left| \frac{1}{T} \sum_{t=1}^T \left( \sum_{r=1}^t \Delta_r \right) \cdot \eta_t (\text{err}_t - \alpha) \right| \\
&= \left| \frac{1}{T} \sum_{r=1}^T \Delta_r \left( \sum_{t=r}^T \eta_t (\text{err}_t - \alpha) \right) \right| \\
&= \left| \frac{1}{T} \sum_{r=1}^T \Delta_r \left( \sum_{t=r}^T \left[
q_{t+1} - q_t - \eta_t \rho_t h(c(s_t - q_t)) + \eta_t\epsilon (1-\rho_t^*) h(c(s_t - q_t^*)))
\right] \right) \right| \\
&\leq \left| \frac{1}{T} \sum_{r=1}^T \Delta_r (q_{T+1} - q_r) \right|
+ \left| \frac{1}{T} \sum_{r=1}^T \Delta_r \sum_{t=r}^T \eta_t (\rho_t h(c(s_t - q_t))
- \epsilon (1-\rho_t^*) h(c(s_t - q_t^*))) \right| \\
&\leq \left| \frac{1}{T} \sum_{r=1}^T \Delta_r (q_{T+1} - q_r) \right|
+ \left| \frac{1}{T} \sum_{t=1}^T (\rho_t h(c(s_t - q_t))
- \epsilon (1-\rho_t^*) h(c(s_t - q_t^*)) \right| \\
&\leq \frac{(2B + (1+2d(1+\epsilon))M_{T}) \, \lVert \Delta_{1:T} \rVert_{1}}{T} + \frac{1}{T}\sum_{t=1}^{T} |h(c\cdot e_t)| + \epsilon \gamma \frac{1}{T}\sum_{t=1}^{T} ||q_t-q_t^*|| \\
&\leq \frac{(2B + (1+2d(1+\epsilon))M_{T}) \, \lVert \Delta_{1:T} \rVert_{1}}{T} + (c L_h + \epsilon \gamma) (2B+K_+M_{T}),
\end{align*}

where 
    $\| \Delta_{1:T} \|_1 = |\eta_1^{-1}| + \sum_{t=2}^{T} |\eta_t^{-1} - \eta_{t-1}^{-1}|$, $K_+=1-\alpha+d(1+\epsilon)$, $\gamma = c L_h+\frac{d}{2B}$ and $c$ is a steepness parameter inside the squashing function, $M_T = \max_{1 \leq r \leq T} \eta_r$.
}

For two bias terms $|h(c\cdot e_t)|$ and $||q_t-q_t^*||$, $2B+K_+M_{T}$ is the worst-case bound according to the Proposition~\ref{proposition:1}.

\subsection{The Heuristic Analysis of $\epsilon$}
\label{appendix:epsilon}


To ensure the update direction is consistent with the primary learning signal, we analyse the condition where the magnitude of the actual feedback exceeds that of the estimated feedback. Incorporating the weighting coefficients from Eq.~\ref{formula:DDCI}, the condition is:$$\frac{|e_t|}{2B} |h(c e_t)| > \epsilon \left(1 - \frac{|e_t^*|}{2B}\right) |h(c e_t^*)|.$$ Leveraging the relationship $s_t - q_t = (s_t - q_t^*) - q_t + q_t^*$, this simplifies to a dynamic bound on $\epsilon$:$$\epsilon < \frac{|e_t^*+q_t^*-q_t|}{2B - |e_t^*|}\frac{|h(c(e_t^*+q_t^*-q_t))|}{|h(ce_t^*)|}.$$

To make this bound tractable, we consider the algorithm in the stationary regime where the parameter $q_t$ fluctuates within a small convergence radius $\xi$ around the estimated threshold, i.e., $|q_t - q_t^*| \le \xi$. We analyse the worst-case scenario where the parameter deviation weakens the learning signal. 

Using the first-order Taylor expansion for the squashing function $h(c \cdot x)$ around $e_t^*$, we approximate the numerator term:
$$ h(c(e_t^* + q_t^* - q_t)) \approx h(c e_t^*) - c \cdot h'(c e_t^*) \cdot \xi. $$
The condition can be simplified to:

$$ \epsilon < \frac{|e_t^*-\xi|}{2B - |e_t^*|} \left( \frac{h(c e_t^*) - c \xi h'(c e_t^*)}{h(c e_t^*)} \right) = (1 - \underbrace{\frac{c \xi h'(c e_t^*)}{h(c e_t^*)}}_{\text{Safety Margin}})\frac{|e_t^*-\xi|}{2B - |e_t^*|} $$

This derived bound reveals two critical constraints on $\epsilon$. First, the term involving $\xi$ acts as a safety margin. This margin accounts for inherent parameter oscillation near equilibrium. Second, the bound dynamically tightens as the prediction error $|e_t^*|$ diminishes. Specifically, the scaling factor $\frac{|e_t^*-\xi|}{2B-|e_t^*|}$ decreases as the error magnitude diminishes. Consequently, the feasible range for $\epsilon$ shrinks significantly in regimes when $e_t^*$ is small. To guarantee stability across all scenarios, it is theoretically preferable to select a small $\epsilon$. This justifies our empirical choice (e.g., $\epsilon=0.2$) to ensure the actual feedback remains dominant, even in the presence of convergence noise.

\section{Additional Experimental results}

\subsection{Description of Datasets}
\label{app:dataset_description}

\begin{itemize}
    \item \textbf{Amazon and Google}: contain thirty blue-chip stock prices, with daily opening prices forecasted on a log scale, spanning the period from January 1, 2006 to December 31, 2014. \url{https://www.kaggle.com/datasets/camnugent/sandp500}.
    \item \textbf{Microsoft}: includes daily stock opening prices from April 1, 2015 to May 31, 2021. \url{https://www.kaggle.com/datasets/camnugent/sandp500}.
    \item \textbf{Delhi temperature}: provides daily measurements of temperature (averaged over eight readings taken at three-hour intervals), humidity, wind speed, and atmospheric pressure in Delhi, from January 1, 2003 to April 24, 2017, collected via the Weather Underground API. \url{https://www.kaggle.com/datasets/sumanthvrao/daily-climate-time-series-data}.
    \item \textbf{Electricity demand} \citep{harries1999splice}: records electricity demand in New South Wales at half-hour intervals from May 7, 1996 to December 5, 1998.
\end{itemize}

\subsection{Additional Experimental Results}

\subsubsection{Delhi temperature and Electricity Demand}
\label{app:addition_results}

On the Delhi temperature dataset, DDCI variants consistently achieve the narrowest prediction intervals (except AR model) while preserving nominal coverage. The improvements over the strongest baselines are smaller in magnitude compared with stock datasets but remain consistent, demonstrating the robustness and adaptability of DDCI in settings with relatively stable time series such as temperature.

For the electricity demand dataset, the performance of the DDCI variants is broadly comparable to that of the ECI baselines rather than superior. Across all forecasting models, DDCI and DDCI-Nex achieve interval widths that are similar to the best ECI methods while maintaining the target coverage level. Although the improvements observed in the stock datasets are not replicated here, the results indicate that DDCI remains competitive without loss of validity. This suggests that the performance of DDCI in the electricity demand setting remains stable and consistent with strong baseline approaches.

\begin{table*}[h!]
\centering
\renewcommand{\arraystretch}{1.2}
\caption{The experimental results on Delhi Temperature and Electricity Demand datasets.}
\label{res_merged_temp_elec}
\resizebox{\textwidth}{!}{%
\begin{tabular}{cccccccccccccc}
\toprule
\multirow{2}{*}{\textbf{Dataset}} & \multirow{2}{*}{\textbf{Method}} 
& \multicolumn{3}{c}{\textbf{Prophet Model}}
& \multicolumn{3}{c}{\textbf{AR Model}}
& \multicolumn{3}{c}{\textbf{Theta Model}}
& \multicolumn{3}{c}{\textbf{Transformer}} \\ \cline{3-14} 
& & \begin{tabular}[c]{@{}c@{}}Coverage\\ (\%)\end{tabular} 
& \begin{tabular}[c]{@{}c@{}}Avg. \\ width\end{tabular} 
& \begin{tabular}[c]{@{}c@{}}Winkler\\ Score\end{tabular} 
& \begin{tabular}[c]{@{}c@{}}Coverage\\ (\%)\end{tabular} 
& \begin{tabular}[c]{@{}c@{}}Avg.\\ width\end{tabular} 
& \begin{tabular}[c]{@{}c@{}}Winkler\\ Score\end{tabular} 
& \begin{tabular}[c]{@{}c@{}}Coverage\\ (\%)\end{tabular} 
& \begin{tabular}[c]{@{}c@{}}Avg.\\ width\end{tabular} 
& \begin{tabular}[c]{@{}c@{}}Winkler\\ Score\end{tabular} 
& \begin{tabular}[c]{@{}c@{}}Coverage\\ (\%)\end{tabular} 
& \begin{tabular}[c]{@{}c@{}}Avg.\\ width\end{tabular} 
& \begin{tabular}[c]{@{}c@{}}Winkler\\ Score\end{tabular} \\ 
\midrule

\multirow{9}{*}{\textbf{Delhi Temp.}}
& ACI
& 91.0 & $\infty$ & $\infty$
& 90.0 & $\infty$ & $\infty$
& 90.2 & $\infty$ & $\infty$
& 90.3 & $\infty$ & $\infty$ \\

& OGD
& 90.4 & 7.54 & 9.78
& 90.1 & 6.82 & \textbf{9.11}
& 90.0 & 6.36 & \textbf{8.48}
& 89.9 & 11.08 & 13.15 \\

& SF-OGD
& 90.0 & 10.63 & 13.24
& 90.1 & 10.92 & 13.72
& 90.1 & 11.33 & 13.99
& 90.0 & 10.93 & 13.64 \\

& decay-OGD
& 90.1 & 11.24 & 13.03
& 90.2 & 10.52 & 12.25
& 90.2 & 10.95 & 12.85
& 89.7 & 14.43 & 16.18 \\

& PID
& 89.9 & 10.87 & 13.36
& 89.7 & 12.28 & 15.11
& 89.7 & 12.49 & 15.13
& 89.9 & 11.57 & 14.15 \\

& ECI
& 90.0 & 7.20 & 9.34
& 90.1 & \underline{6.39} & 9.99
& 90.0 & 6.41 & 8.76
& 89.9 & 9.15 & 12.33 \\

& COP
& 90.1 & 7.05 & 9.35
& 90.1 & \textbf{6.35} & \underline{9.99}
& 90.0 & \textbf{6.27} & 8.69
& 89.9 & 9.12 & 12.36 \\
\cmidrule{2-14}

& \textbf{DDCI}
& 90.0 & \textbf{6.95} & \textbf{9.10}
& 89.9 & 6.44 & 10.04
& 90.0 & 6.36 & 8.69
& 89.9 & \underline{8.57} & \textbf{11.80} \\

& \textbf{DDCI-Nex}
& 90.0 & \underline{7.00} & \underline{9.16}
& 89.9 & 6.39 & 10.10
& 90.0 & \underline{6.31} & \underline{8.67}
& 89.9 & \textbf{8.56} & \underline{11.87} \\

\midrule

\multirow{9}{*}{\textbf{Electricity}}
& ACI
& 90.1 & $\infty$ & $\infty$
& 90.1 & $\infty$ & $\infty$
& 90.2 & $\infty$ & $\infty$
& 90.2 & $\infty$ & $\infty$ \\

& OGD
& 89.8 & 0.433 & 0.504
& 90.0 & 0.133 & 0.188
& 90.1 & 0.081 & 0.117
& 90.1 & 0.139 & 0.181 \\

& SF-OGD
& 89.9 & 0.419 & 0.479
& 90.0 & 0.129 & 0.180
& 90.3 & 0.106 & 0.136
& 90.3 & 0.141 & 0.179 \\

& decay-OGD
& 90.1 & 0.532 & 0.616
& 90.1 & 0.122 & \textbf{0.154}
& 90.0 & 0.100 & 0.132
& 90.3 & 0.147 & \textbf{0.176} \\

& PID
& 90.1 & \textbf{0.373} & \textbf{0.424}
& 90.0 & 0.434 & 0.518
& 89.9 & 0.413 & 0.482
& 89.9 & 0.428 & 0.504 \\

& ECI
& 90.1 & 0.395 & 0.458
& 90.0 & \textbf{0.118} & 0.173
& 89.8 & \textbf{0.069} & \underline{0.111}
& 90.2 & \underline{0.135} & \underline{0.177} \\

& COP
& 90.1 & \underline{0.385} & \underline{0.448}
& 90.0 & \textbf{0.118} & \underline{0.173}
& 89.8 & \textbf{0.069} & 0.112
& 90.2 & 0.136 & 0.178 \\
\cmidrule{2-14}

& \textbf{DDCI}
& 90.1 & 0.392 & 0.454
& 90.0 & \textbf{0.118} & 0.174
& 89.8 & \textbf{0.069} & \textbf{0.111}
& 90.2 & \underline{0.135} & 0.178 \\

& \textbf{DDCI-Nex}
& 90.1 & 0.400 & 0.463
& 90.0 & \textbf{0.118} & 0.174
& 89.8 & \textbf{0.069} & 0.111
& 90.2 & \textbf{0.134} & 0.178 \\

\bottomrule
\end{tabular}}
\end{table*}

\subsubsection{Varying Significance Levels $\alpha$ and Squashing Functions $h(\cdot)$}
\label{results:siglevels}

We conducted experiments on Amazon stock dataset under Prophet model. To comprehensively evaluate the calibration behaviour of the online conformal methods, we conducted a multi-level significance experiment by sweeping $\alpha$ from 0.05 to 0.9. For each level, we measured the empirical coverage, the mean interval width, and the Winkler score across four methods (DDCI, ECI, PID and OGD).

\begin{figure}[h]
    \centering
    \includegraphics[width=0.95\linewidth]{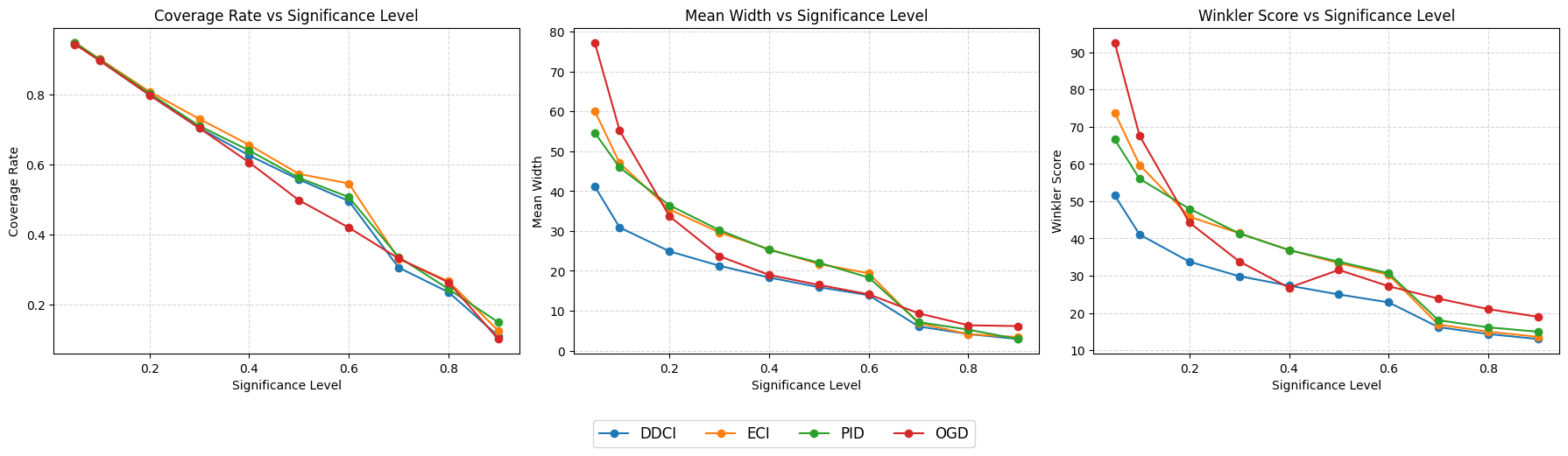}
    \caption{Comparison of empirical coverage rate, mean interval width, and Winkler Score across a wide range of significance levels ($\alpha$ = 0.05–0.9) for four online conformal inference methods: DDCI, ECI, PID, and OGD.}
    \label{fig:placeholder}
\end{figure}

Across the full range of significance levels from 0.05 to 0.9, DDCI consistently demonstrates superior calibration stability and interval efficiency compared with existing online conformal methods. In terms of efficiency, DDCI achieves the narrowest mean interval widths across almost all $\alpha$, reflecting its ability to maintain reliable calibration without resorting to excessively conservative intervals. This advantage is further reinforced by the Winkler Score results: DDCI attains the lowest Winkler Score throughout the entire spectrum, indicating the most favourable balance between coverage accuracy and interval compactness. Overall, the empirical curves of coverage, width, and Winkler Score collectively show that DDCI offers a more robust and efficient update rule, yielding both tighter and better-calibrated prediction intervals under varying levels of uncertainty.

We also tested different squashing functions with different significance level (0.2, 0.1 and 0.05). Table~\ref{table:diff_alpha} shows that changing squashing function does not influence the results significantly. Details are given below.

\begin{table}[h]
\centering
\renewcommand{\arraystretch}{1.2}
\caption{Results under different significance levels $\alpha$ on Amazon stock dataset.}
\label{table:diff_alpha}
\resizebox{\textwidth}{!}{
\begin{tabular}{ccccccccccccc}
\hline
\multirow{2}{*}{\textbf{Method}} 
& \multicolumn{3}{c}{\textbf{$\alpha=0.20$}} 
& \multicolumn{3}{c}{\textbf{$\alpha=0.10$}}
& \multicolumn{3}{c}{\textbf{$\alpha=0.05$}} \\ \cline{2-10}
& \begin{tabular}[c]{@{}c@{}}Coverage\\ (\%)\end{tabular}
& \begin{tabular}[c]{@{}c@{}}Avg.\\ width\end{tabular}
& \begin{tabular}[c]{@{}c@{}}Winkler\\ Score\end{tabular}
& \begin{tabular}[c]{@{}c@{}}Coverage\\ (\%)\end{tabular}
& \begin{tabular}[c]{@{}c@{}}Avg.\\ width\end{tabular}
& \begin{tabular}[c]{@{}c@{}}Winkler\\ Score\end{tabular}
& \begin{tabular}[c]{@{}c@{}}Coverage\\ (\%)\end{tabular}
& \begin{tabular}[c]{@{}c@{}}Avg.\\ width\end{tabular}
& \begin{tabular}[c]{@{}c@{}}Winkler\\ Score\end{tabular} \\
\hline  

OGD 
& 79.7 & 33.70 & 44.27 
& 89.6 & 55.15 & 67.57 
& 94.3 & 77.24 & 92.63 \\

SF-OGD 
& 79.7 & 44.51 & 58.31 
& 89.5 & 61.47 & 73.94 
& 94.5 & 79.11 & 92.74 \\

decay-OGD 
& 80.1 & 46.92 & 57.98 
& 89.9 & 97.22 & 107.79 
& 90.9 & 107.32 & 159.64 \\

PID 
& 80.3 & 36.45 & 47.97 
& 89.8 & 52.56 & 56.02 
& 94.8 & 54.65 & 66.58 \\

ECI 
& 80.8 & 35.39 & 45.85 
& 90.1 & 47.00 & 59.74 
& 94.8 & 60.10 & 73.81 \\

ECI-cutoff 
& 80.0 & 31.81 & 40.93 
& 89.7 & 43.46 & 53.21 
& 94.8 & 55.97 & 66.65 \\

ECI-integral 
& 80.0 & 32.21 & 41.26 
& 89.8 & 42.01 & 51.85 
& 94.8 & 56.00 & 67.19 \\ \hline

\textbf{DDCI} 
& 79.9 & \underline{24.91} & \underline{33.76} 
& 89.7 & \textbf{30.87} & 41.06
& 94.8 & \underline{41.17} & \underline{51.56} \\

\textbf{DDCI-Nex} 
& 79.8 & 25.25 & 34.27 
& 89.7 & \underline{30.92} & \underline{40.95}
& 94.7 & \textbf{41.09} & \textbf{51.32} \\

\textbf{DDCI-Softsign} 
& 80.1 & \textbf{24.56} & \textbf{33.34} 
& 89.7 & 31.11 & \textbf{40.90} 
& 94.8 & 41.23 & 51.77 \\ \hline

\end{tabular}}
\end{table}

\subsubsection{Learning Rate $\eta$ Analysis}
\label{results:learningrates}

In this section, we analyse the impact of different learning rates on the performance of the proposed method, using the Google stock dataset as an example. The learning rate range follows \citet{wu2025error}. For PID, ECI, and DDCI, the base learning rates $\eta$ considered are [1, 0.5, 0.1, 0.05] and these work used the proportional learning rates $\eta_t = {\max_{t-w \leq i,j \leq t} \, \lvert s_i - s_j \rvert} \cdot \eta$. The results for coverage and prediction set of four underlying algorithms are given from Fig.~\ref{fig:cov_ar} to Fig.~\ref{fig:set_transformer}. In our experimental setting, the range $d$ for the squashing function is set to 1. Therefore, the upper bound for $\eta$ should be $\frac{1}{2-\alpha+\epsilon}$. Since $\alpha$ and $\epsilon$ are typically small, the learning rate $\eta$ should not exceed 0.5 (it is obvious that our proposed method loses validity and generates extremely wide prediction intervals when the learning rate is set to 1), which means that the empirical observations are consistent with the theoretical analysis. In general, such learning rate is suitable for all datasets without influencing the validity of intervals. However, such large learning rate always yields conservative prediction intervals for all online conformal methods.

\begin{figure}[H]
    \centering
    \includegraphics[width=\linewidth]{Figs/ar_coverage.pdf}
    \caption{Coverage result for AR model.}
    \label{fig:cov_ar}
\end{figure}

\begin{figure}[H]
    \centering
    \includegraphics[width=\linewidth]{Figs/ar_sets.pdf}
    \caption{Prediction sets result for AR model.}
    \label{fig:set_ar}
\end{figure}

\vspace{2mm}

\begin{figure}[H]
    \centering
    \includegraphics[width=\linewidth]{Figs/prophet_coverage.pdf}
    \caption{Coverage result for Prophet model.}
    \label{fig:cov_prophet}
\end{figure}

\vspace{2mm}

\begin{figure}[H]
    \centering
    \includegraphics[width=\linewidth]{Figs/prophet_sets.pdf}
    \caption{Prediction sets result for Prophet model.}
    \label{fig:set_prophet}
\end{figure}

\vspace{2mm}

\begin{figure}[H]
    \centering
    \includegraphics[width=\linewidth]{Figs/theta_coverage.pdf}
    \caption{Coverage result for Theta model.}
    \label{fig:cov_theta}
\end{figure}

\vspace{2mm}

\begin{figure}[H]
    \centering
    \includegraphics[width=\linewidth]{Figs/theta_sets.pdf}
    \caption{Prediction sets result for Theta model.}
    \label{fig:set_theta}
\end{figure}

\vspace{2mm}

\begin{figure}[H]
    \centering
    \includegraphics[width=\linewidth]{Figs/transformer_coverage.pdf}
    \caption{Coverage result for Transformer model.}
    \label{fig:cov_transformer}
\end{figure}

\vspace{2mm}

\begin{figure}[H]
    \centering
    \includegraphics[width=\linewidth]{Figs/transformer_sets.pdf}
    \caption{Prediction sets result for Transformer model.}
    \label{fig:set_transformer}
\end{figure}

\subsubsection{Window Length $w$ Analysis}
\label{results:window_length}

We conducted experiments on Google and Amazon datasets, varying the window length $w$ across [50, 100, 200, 300, 500] using Prophet. The results are given in the Fig.~\ref{fig:wl} below. Based on the results, a smaller $w$ (e.g., 50) restricts the "memory" of the estimated feedback. It allows the empirical threshold $q_t^*$ to closely track the current volatility, which causes high miscoverage. A larger $w$ uses a long history to compute $q_t^*$. This introduces unnecessary lag, making the estimated feedback sluggish, thus inflating the interval width.

\begin{figure}[htbp]
    \centering
    \includegraphics[width=\linewidth]{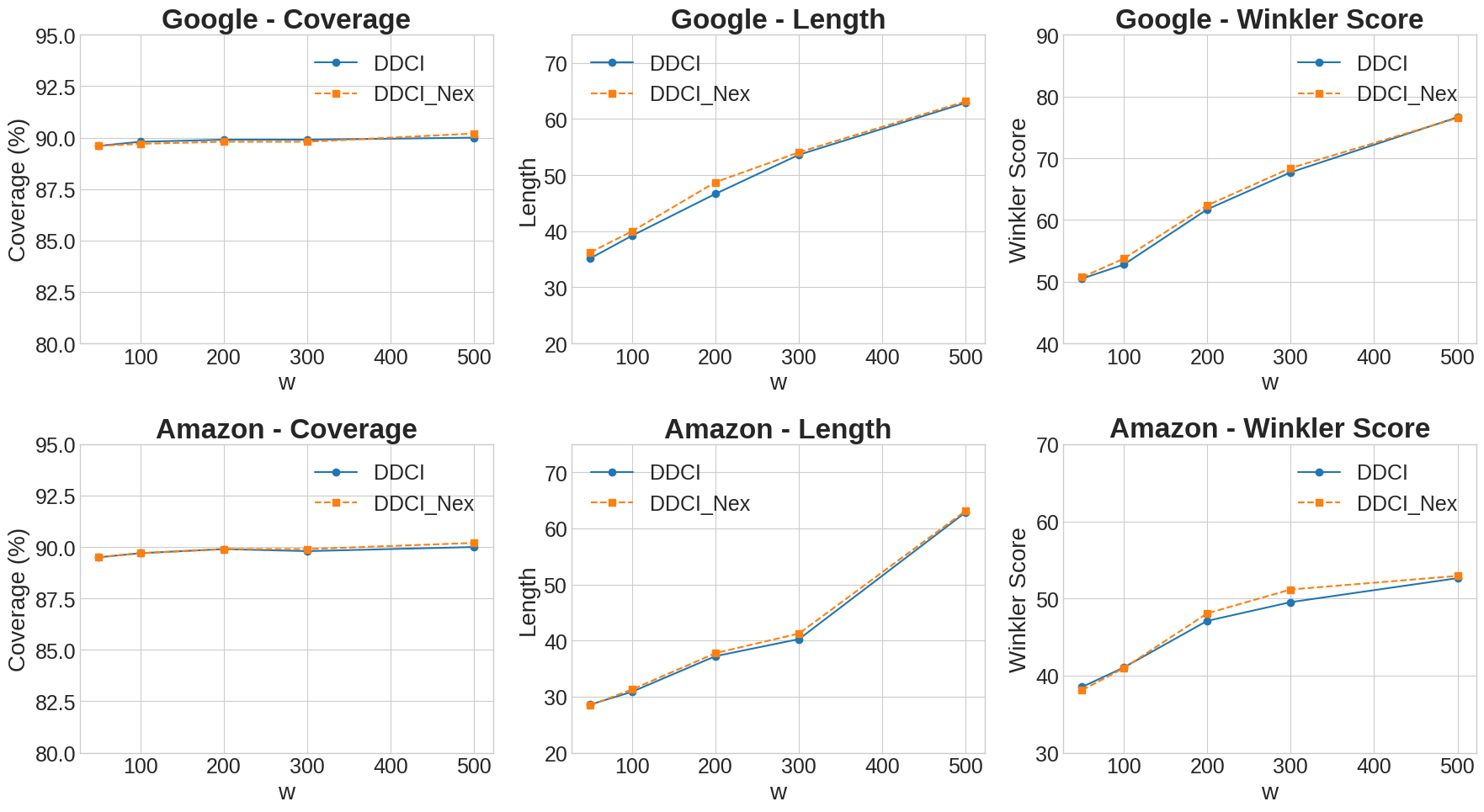}
    \caption{The influence of different window length on Google and Amazon datasets.}
    \label{fig:wl}
\end{figure}

\subsubsection{Squashing Function Range $d$ Analysis}
\label{results:function_range}

We did an analysis on $d$ (ranging from 0.1 to 1.5) on Google dataset under Prophet model. The results show that the value of $d$ would not influence the validity but it is meaningless when it is small. As shown in the Fig.~\ref{fig:ws}, across a wide and practically meaningful range of values, the empirical coverage consistently concentrates tightly around the 90\% target level. However, when $d$ is set to small values, the algorithm effectively degenerates to standard OGD.

\begin{figure}
    \centering
    \includegraphics[width=\linewidth]{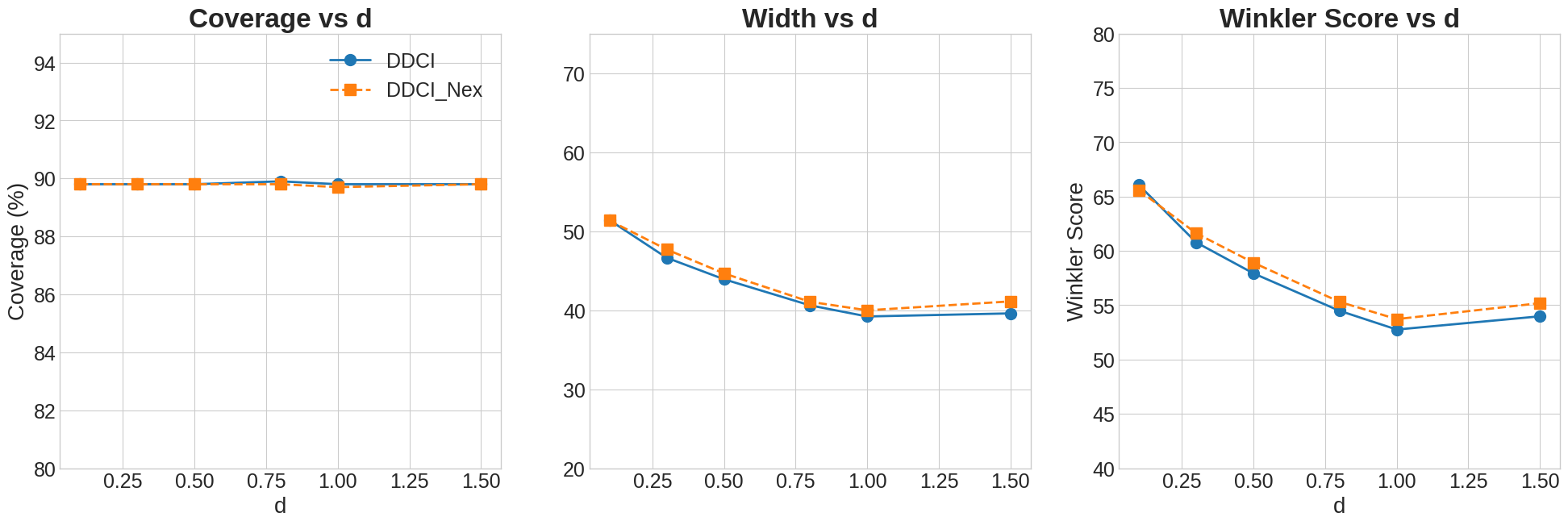}
    \caption{The influence of different $d$ on the Google dataset under the Prophet model.}
    \label{fig:ws}
\end{figure}

\subsubsection{Steepness Parameter $c$ Analysis}
\label{results:steepness}

As illustrated in Fig.~\ref{fig:steepness}, we evaluate the impact of the steepness parameter $c$ (ranging from 0.1 to 1.5) on the Prophet model's forecasting performance using the Google stock dataset. The leftmost subplot clearly demonstrates that the empirical coverage rates for both DDCI and DDCI-Nex remain exceptionally stable, forming a nearly flat line that closely adheres to the 90\% target baseline across the entire range of $c$. Similarly, the middle and right subplots reveal that the Mean Interval Width and Mean Winkler Score exhibit strong robustness to variations in $c$, showing no significant degradation or structural fluctuation. These empirical observations corroborate our worst-case bound in Theorem~\ref{theorem:coverage}, confirming that $c$ controls the theoretical bias without substantially affecting empirical performance.

\begin{figure}
    \centering
    \includegraphics[width=\linewidth]{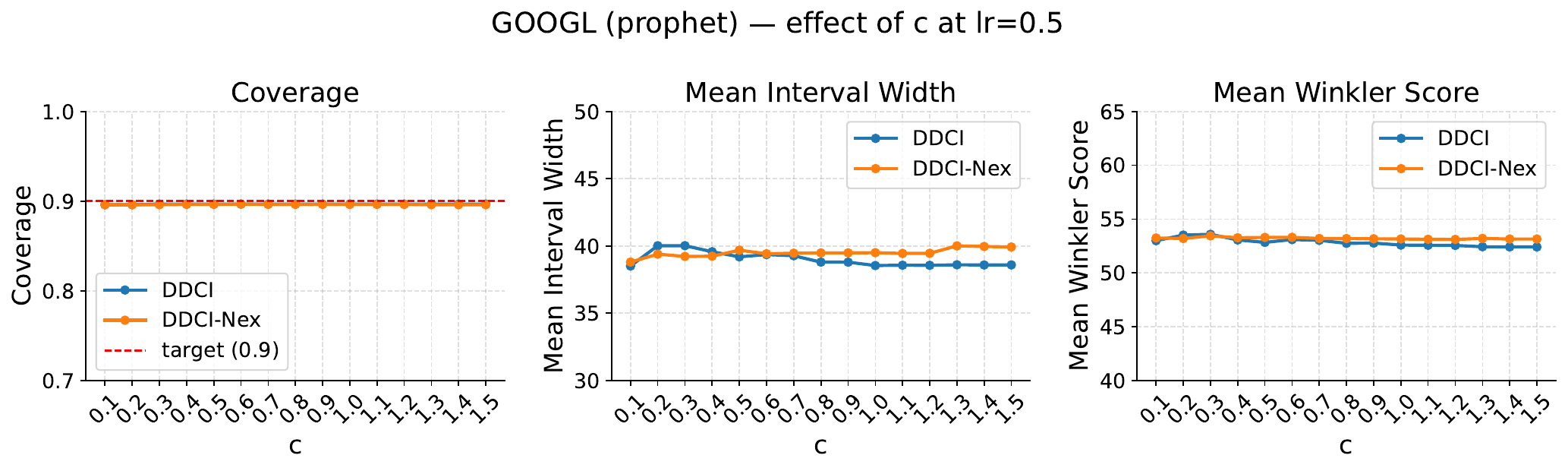}
    \caption{The influence of different $c$ on the Google dataset under the Prophet model.}
    \label{fig:steepness}
\end{figure}

\subsubsection{Ablation Study}
\label{visual:epsilon}

\paragraph{Visualisation.} Fig.~\ref{fig:visualisation_eps} illustrates the effect of different values of $\epsilon$ on the proposed DDCI method through AR model. In this example, we set $\epsilon$ to 0 and 0.8 respectively, which can be regarded as a visualisation for ablation study. When $\epsilon$ is small, the update process becomes more volatile, leading to fluctuations in the prediction intervals. In contrast, larger $\epsilon$ values produce smoother and more stable updates, though at the cost of wider intervals when the learning rate is large. Thus, the choice of $\epsilon$ entails a trade-off between efficiency and stability.

\begin{figure*}[htbp]
    \centering
    \includegraphics[width=0.7\linewidth]{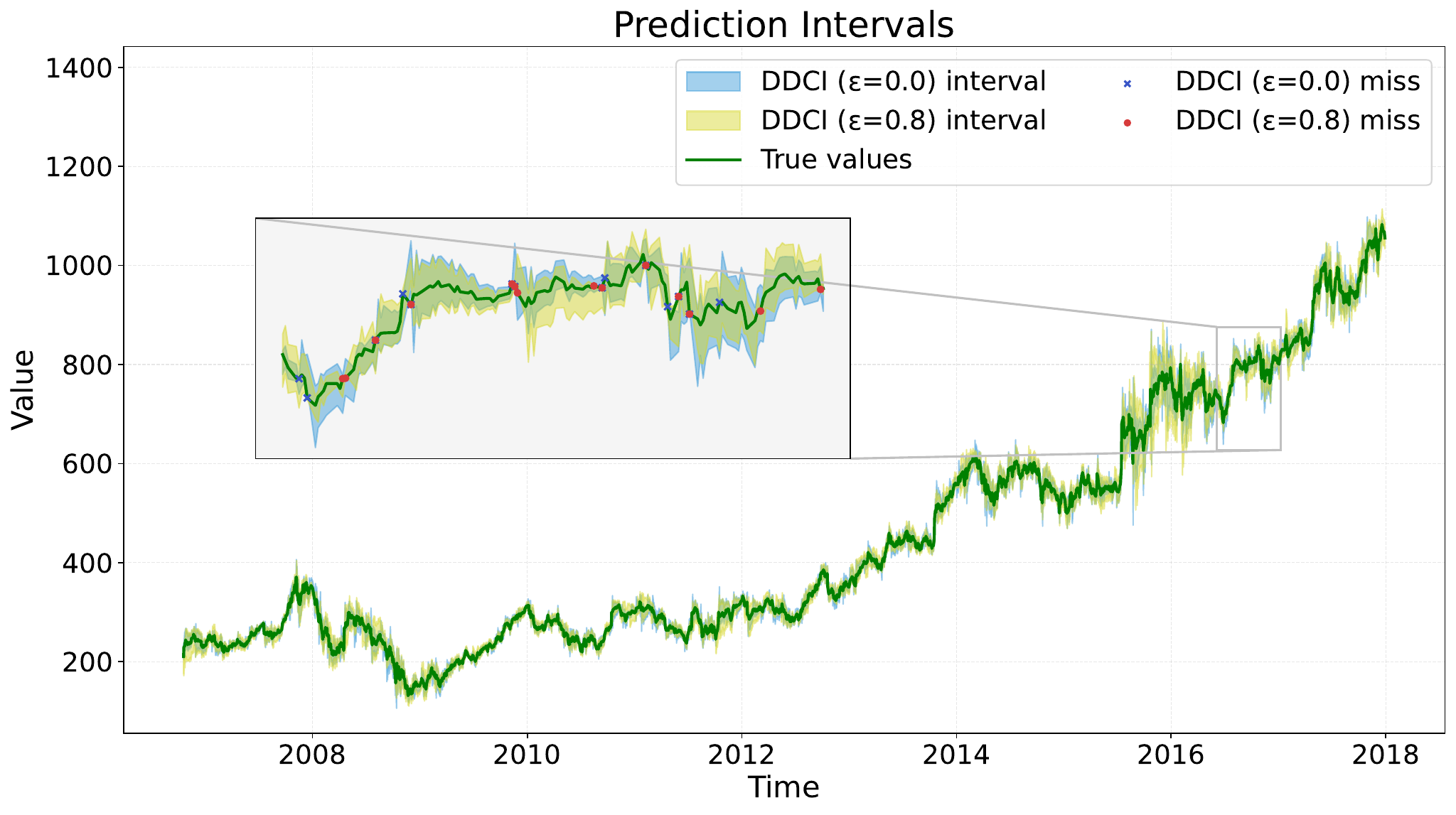}
    \caption{Visualisation of the proposed DDCI method with different $\epsilon$ (0.0 and 0.8 respectively) under AR model.}
    \label{fig:visualisation_eps}
\end{figure*}

\paragraph{Numerical Results.}  We conducted the experiments on all datasets under all models for PID, ECI, and dual/actual/no-feedback methods. The results are given in the Tab.~\ref{tab:additional_stability}. The ablation results clearly validate our \emph{core contribution}: introducing the estimated feedback successfully stabilises the update process. While all three methods maintain the target coverage, our dual-feedback method achieves significantly less oscillation, demonstrating superior stability.

\begin{table*}[t]
\centering
\small
\caption{Additional stability results under AR, Theta, and Transformer. Electricity values are displayed after multiplication by $10^3$. Bold and underline denote the lowest and second-lowest unrounded values, respectively.}
\label{tab:additional_stability}
\resizebox{\textwidth}{!}{%
\begin{tabular}{llccccc}
\toprule
Forecaster & Dataset
& PID
& ECI
& DDCI ($c=0$)
& DDCI ($\epsilon=0$)
& DDCI \\
\midrule

\multirow{5}{*}{AR}
& GOOGL
& 1.3204
& 0.5630
& \underline{0.4707}
& 0.4710
& \textbf{0.4294} \\

& AMZN
& 1.1762
& 0.8711
& 0.7260
& \underline{0.7227}
& \textbf{0.6330} \\

& MSFT
& 0.8166
& 0.5351
& 0.4100
& \underline{0.4034}
& \textbf{0.3768} \\

& Delhi
& 1.0344
& 0.4004
& 0.2860
& \underline{0.2757}
& \textbf{0.2654} \\

& Electricity
& 10.2395
& 4.0433
& 4.0455
& \underline{4.0011}
& \textbf{3.9913} \\
\midrule

\multirow{5}{*}{Theta}
& GOOGL
& 1.6048
& 1.3182
& 1.1391
& \underline{1.1356}
& \textbf{1.0094} \\

& AMZN
& 1.1951
& 0.8819
& 0.7296
& \underline{0.7274}
& \textbf{0.6369} \\

& MSFT
& 0.5412
& 0.1214
& 0.0919
& \textbf{0.0825}
& \underline{0.0842} \\

& Delhi
& 1.0057
& 0.3627
& 0.2546
& \underline{0.2441}
& \textbf{0.2341} \\

& Electricity
& 8.9418
& 2.4094
& 2.4099
& \underline{2.4002}
& \textbf{2.3952} \\
\midrule

\multirow{5}{*}{Transformer}
& GOOGL
& 10.9010
& 11.8597
& \underline{10.4455}
& 12.4885
& \textbf{8.6863} \\

& AMZN
& 7.0344
& 7.8958
& \underline{6.7830}
& 8.3479
& \textbf{5.5897} \\

& MSFT
& 0.9441
& 0.7432
& \underline{0.5823}
& 0.6178
& \textbf{0.5038} \\

& Delhi
& 1.2411
& 0.7240
& 0.5308
& \underline{0.5229}
& \textbf{0.4725} \\

& Electricity
& 11.0325
& 4.9771
& 4.9832
& \underline{4.9430}
& \textbf{4.9241} \\
\bottomrule
\end{tabular}%
}
\end{table*}

Across these 15 additional dataset--forecaster settings, the full DDCI achieves the lowest MASD in 14 settings. The only exception is MSFT under Theta, where the actual-feedback-only configuration obtains a slightly lower MASD ($0.0825$ versus $0.0842$). Nevertheless, the full DDCI outperforms PID, ECI, and the $c=0$ configuration in all 15 settings. Adding estimated feedback reduces MASD relative to actual feedback alone in 14 of the 15 settings. Combined with the five Prophet results in Table~\ref{tab:stability}, DDCI achieves the lowest MASD in 19 of the 20 dataset--forecaster settings.

\section{Reproducibility}
\label{App:code}

All experiments are run with Apple M2 Pro. Here is the anonymous link of the code to reproduce the experimental results and figures: \texttt{https://anonymous.4open.science/r/DDCI-47A7}.

\section{Usage of Language Models}

Large Language Models were used to polish and improve the grammar, spelling, and readability of this manuscript, but all research design, analysis, and interpretation were conducted by the authors.


\newpage
\input{checklist.tex}

\end{document}

%% file: checklist.tex
\section*{NeurIPS Paper Checklist}



\begin{enumerate}

\item {\bf Claims}
    \item[] Question: Do the main claims made in the abstract and introduction accurately reflect the paper's contributions and scope?
    \item[] Answer: \answerYes{} 
    \item[] Justification: The main contributions are summarised at the end of the Section~\ref{sec1:intro}.
    \item[] Guidelines:
    \begin{itemize}
        \item The answer \answerNA{} means that the abstract and introduction do not include the claims made in the paper.
        \item The abstract and/or introduction should clearly state the claims made, including the contributions made in the paper and important assumptions and limitations. A \answerNo{} or \answerNA{} answer to this question will not be perceived well by the reviewers. 
        \item The claims made should match theoretical and experimental results, and reflect how much the results can be expected to generalize to other settings. 
        \item It is fine to include aspirational goals as motivation as long as it is clear that these goals are not attained by the paper. 
    \end{itemize}

\item {\bf Limitations}
    \item[] Question: Does the paper discuss the limitations of the work performed by the authors?
    \item[] Answer: \answerYes{} 
    \item[] Justification: The theoretical limitation of the proposed method is discussed within Section~\ref{mt:TA} and the analysis of the additional parameter $\epsilon$ is given within Section~\ref{appendix:epsilon}.
    \item[] Guidelines:
    \begin{itemize}
        \item The answer \answerNA{} means that the paper has no limitation while the answer \answerNo{} means that the paper has limitations, but those are not discussed in the paper. 
        \item The authors are encouraged to create a separate ``Limitations'' section in their paper.
        \item The paper should point out any strong assumptions and how robust the results are to violations of these assumptions (e.g., independence assumptions, noiseless settings, model well-specification, asymptotic approximations only holding locally). The authors should reflect on how these assumptions might be violated in practice and what the implications would be.
        \item The authors should reflect on the scope of the claims made, e.g., if the approach was only tested on a few datasets or with a few runs. In general, empirical results often depend on implicit assumptions, which should be articulated.
        \item The authors should reflect on the factors that influence the performance of the approach. For example, a facial recognition algorithm may perform poorly when image resolution is low or images are taken in low lighting. Or a speech-to-text system might not be used reliably to provide closed captions for online lectures because it fails to handle technical jargon.
        \item The authors should discuss the computational efficiency of the proposed algorithms and how they scale with dataset size.
        \item If applicable, the authors should discuss possible limitations of their approach to address problems of privacy and fairness.
        \item While the authors might fear that complete honesty about limitations might be used by reviewers as grounds for rejection, a worse outcome might be that reviewers discover limitations that aren't acknowledged in the paper. The authors should use their best judgment and recognize that individual actions in favor of transparency play an important role in developing norms that preserve the integrity of the community. Reviewers will be specifically instructed to not penalize honesty concerning limitations.
    \end{itemize}

\item {\bf Theory assumptions and proofs}
    \item[] Question: For each theoretical result, does the paper provide the full set of assumptions and a complete (and correct) proof?
    \item[] Answer: \answerYes{} 
    \item[] Justification: The theoretical analysis is given in the Section~\ref{mt:TA} and the related proof is given in the Appendix~\ref{app:ta}.
    \item[] Guidelines:
    \begin{itemize}
        \item The answer \answerNA{} means that the paper does not include theoretical results. 
        \item All the theorems, formulas, and proofs in the paper should be numbered and cross-referenced.
        \item All assumptions should be clearly stated or referenced in the statement of any theorems.
        \item The proofs can either appear in the main paper or the supplemental material, but if they appear in the supplemental material, the authors are encouraged to provide a short proof sketch to provide intuition. 
        \item Inversely, any informal proof provided in the core of the paper should be complemented by formal proofs provided in appendix or supplemental material.
        \item Theorems and Lemmas that the proof relies upon should be properly referenced. 
    \end{itemize}

    \item {\bf Experimental result reproducibility}
    \item[] Question: Does the paper fully disclose all the information needed to reproduce the main experimental results of the paper to the extent that it affects the main claims and/or conclusions of the paper (regardless of whether the code and data are provided or not)?
    \item[] Answer: \answerYes{} 
    \item[] Justification: The detailed methodology is described in Section~\ref{sec4:methodology} and all experimental settings are provided in Section~\ref{sec:exp_settings}.
    \item[] Guidelines:
    \begin{itemize}
        \item The answer \answerNA{} means that the paper does not include experiments.
        \item If the paper includes experiments, a \answerNo{} answer to this question will not be perceived well by the reviewers: Making the paper reproducible is important, regardless of whether the code and data are provided or not.
        \item If the contribution is a dataset and\slash or model, the authors should describe the steps taken to make their results reproducible or verifiable. 
        \item Depending on the contribution, reproducibility can be accomplished in various ways. For example, if the contribution is a novel architecture, describing the architecture fully might suffice, or if the contribution is a specific model and empirical evaluation, it may be necessary to either make it possible for others to replicate the model with the same dataset, or provide access to the model. In general. releasing code and data is often one good way to accomplish this, but reproducibility can also be provided via detailed instructions for how to replicate the results, access to a hosted model (e.g., in the case of a large language model), releasing of a model checkpoint, or other means that are appropriate to the research performed.
        \item While NeurIPS does not require releasing code, the conference does require all submissions to provide some reasonable avenue for reproducibility, which may depend on the nature of the contribution. For example
        \begin{enumerate}
            \item If the contribution is primarily a new algorithm, the paper should make it clear how to reproduce that algorithm.
            \item If the contribution is primarily a new model architecture, the paper should describe the architecture clearly and fully.
            \item If the contribution is a new model (e.g., a large language model), then there should either be a way to access this model for reproducing the results or a way to reproduce the model (e.g., with an open-source dataset or instructions for how to construct the dataset).
            \item We recognize that reproducibility may be tricky in some cases, in which case authors are welcome to describe the particular way they provide for reproducibility. In the case of closed-source models, it may be that access to the model is limited in some way (e.g., to registered users), but it should be possible for other researchers to have some path to reproducing or verifying the results.
        \end{enumerate}
    \end{itemize}

\item {\bf Open access to data and code}
    \item[] Question: Does the paper provide open access to the data and code, with sufficient instructions to faithfully reproduce the main experimental results, as described in supplemental material?
    \item[] Answer: \answerYes{} 
    \item[] Justification: The anonymous Github code link is provided in the Appendix~\ref{App:code}.
    \item[] Guidelines:
    \begin{itemize}
        \item The answer \answerNA{} means that paper does not include experiments requiring code.
        \item Please see the NeurIPS code and data submission guidelines (\url{https://neurips.cc/public/guides/CodeSubmissionPolicy}) for more details.
        \item While we encourage the release of code and data, we understand that this might not be possible, so \answerNo{} is an acceptable answer. Papers cannot be rejected simply for not including code, unless this is central to the contribution (e.g., for a new open-source benchmark).
        \item The instructions should contain the exact command and environment needed to run to reproduce the results. See the NeurIPS code and data submission guidelines (\url{https://neurips.cc/public/guides/CodeSubmissionPolicy}) for more details.
        \item The authors should provide instructions on data access and preparation, including how to access the raw data, preprocessed data, intermediate data, and generated data, etc.
        \item The authors should provide scripts to reproduce all experimental results for the new proposed method and baselines. If only a subset of experiments are reproducible, they should state which ones are omitted from the script and why.
        \item At submission time, to preserve anonymity, the authors should release anonymized versions (if applicable).
        \item Providing as much information as possible in supplemental material (appended to the paper) is recommended, but including URLs to data and code is permitted.
    \end{itemize}

\item {\bf Experimental setting/details}
    \item[] Question: Does the paper specify all the training and test details (e.g., data splits, hyperparameters, how they were chosen, type of optimizer) necessary to understand the results?
    \item[] Answer: \answerYes{} 
    \item[] Justification: All experimental settings are provided in the Section~\ref{sec:exp_settings}
    \item[] Guidelines:
    \begin{itemize}
        \item The answer \answerNA{} means that the paper does not include experiments.
        \item The experimental setting should be presented in the core of the paper to a level of detail that is necessary to appreciate the results and make sense of them.
        \item The full details can be provided either with the code, in appendix, or as supplemental material.
    \end{itemize}

\item {\bf Experiment statistical significance}
    \item[] Question: Does the paper report error bars suitably and correctly defined or other appropriate information about the statistical significance of the experiments?
    \item[] Answer: \answerYes{} 
    \item[] Justification: Most online CP methods (including ACI\citep{gibbs2021adaptive}, PID\citep{angelopoulos2023pid}, ECI\citep{wu2025error}) do not report variance because the update rules are deterministic and our experiments follows their settings strictly. For our synthetic datasets, we used 5 random seeds to generate three settings and reported the variance.
    \item[] Guidelines:
    \begin{itemize}
        \item The answer \answerNA{} means that the paper does not include experiments.
        \item The authors should answer \answerYes{} if the results are accompanied by error bars, confidence intervals, or statistical significance tests, at least for the experiments that support the main claims of the paper.
        \item The factors of variability that the error bars are capturing should be clearly stated (for example, train/test split, initialization, random drawing of some parameter, or overall run with given experimental conditions).
        \item The method for calculating the error bars should be explained (closed form formula, call to a library function, bootstrap, etc.)
        \item The assumptions made should be given (e.g., Normally distributed errors).
        \item It should be clear whether the error bar is the standard deviation or the standard error of the mean.
        \item It is OK to report 1-sigma error bars, but one should state it. The authors should preferably report a 2-sigma error bar than state that they have a 96\% CI, if the hypothesis of Normality of errors is not verified.
        \item For asymmetric distributions, the authors should be careful not to show in tables or figures symmetric error bars that would yield results that are out of range (e.g., negative error rates).
        \item If error bars are reported in tables or plots, the authors should explain in the text how they were calculated and reference the corresponding figures or tables in the text.
    \end{itemize}

\item {\bf Experiments compute resources}
    \item[] Question: For each experiment, does the paper provide sufficient information on the computer resources (type of compute workers, memory, time of execution) needed to reproduce the experiments?
    \item[] Answer: \answerYes{} 
    \item[] Justification: All experiments are conducted with Apple M2 Pro.
    \item[] Guidelines:
    \begin{itemize}
        \item The answer \answerNA{} means that the paper does not include experiments.
        \item The paper should indicate the type of compute workers CPU or GPU, internal cluster, or cloud provider, including relevant memory and storage.
        \item The paper should provide the amount of compute required for each of the individual experimental runs as well as estimate the total compute. 
        \item The paper should disclose whether the full research project required more compute than the experiments reported in the paper (e.g., preliminary or failed experiments that didn't make it into the paper). 
    \end{itemize}
    
\item {\bf Code of ethics}
    \item[] Question: Does the research conducted in the paper conform, in every respect, with the NeurIPS Code of Ethics \url{https://neurips.cc/public/EthicsGuidelines}?
    \item[] Answer: \answerYes{} 
    \item[] Justification: The research conducted in the paper conform with the NeurIPS Code of Ethics.
    \item[] Guidelines:
    \begin{itemize}
        \item The answer \answerNA{} means that the authors have not reviewed the NeurIPS Code of Ethics.
        \item If the authors answer \answerNo, they should explain the special circumstances that require a deviation from the Code of Ethics.
        \item The authors should make sure to preserve anonymity (e.g., if there is a special consideration due to laws or regulations in their jurisdiction).
    \end{itemize}

\item {\bf Broader impacts}
    \item[] Question: Does the paper discuss both potential positive societal impacts and negative societal impacts of the work performed?
    \item[] Answer: \answerNA{} 
    \item[] Justification: There is no societal impact of the work performed.
    \item[] Guidelines:
    \begin{itemize}
        \item The answer \answerNA{} means that there is no societal impact of the work performed.
        \item If the authors answer \answerNA{} or \answerNo, they should explain why their work has no societal impact or why the paper does not address societal impact.
        \item Examples of negative societal impacts include potential malicious or unintended uses (e.g., disinformation, generating fake profiles, surveillance), fairness considerations (e.g., deployment of technologies that could make decisions that unfairly impact specific groups), privacy considerations, and security considerations.
        \item The conference expects that many papers will be foundational research and not tied to particular applications, let alone deployments. However, if there is a direct path to any negative applications, the authors should point it out. For example, it is legitimate to point out that an improvement in the quality of generative models could be used to generate Deepfakes for disinformation. On the other hand, it is not needed to point out that a generic algorithm for optimizing neural networks could enable people to train models that generate Deepfakes faster.
        \item The authors should consider possible harms that could arise when the technology is being used as intended and functioning correctly, harms that could arise when the technology is being used as intended but gives incorrect results, and harms following from (intentional or unintentional) misuse of the technology.
        \item If there are negative societal impacts, the authors could also discuss possible mitigation strategies (e.g., gated release of models, providing defenses in addition to attacks, mechanisms for monitoring misuse, mechanisms to monitor how a system learns from feedback over time, improving the efficiency and accessibility of ML).
    \end{itemize}
    
\item {\bf Safeguards}
    \item[] Question: Does the paper describe safeguards that have been put in place for responsible release of data or models that have a high risk for misuse (e.g., pre-trained language models, image generators, or scraped datasets)?
    \item[] Answer: \answerNA{} 
    \item[] Justification: The paper poses no such risks.
    \item[] Guidelines:
    \begin{itemize}
        \item The answer \answerNA{} means that the paper poses no such risks.
        \item Released models that have a high risk for misuse or dual-use should be released with necessary safeguards to allow for controlled use of the model, for example by requiring that users adhere to usage guidelines or restrictions to access the model or implementing safety filters. 
        \item Datasets that have been scraped from the Internet could pose safety risks. The authors should describe how they avoided releasing unsafe images.
        \item We recognize that providing effective safeguards is challenging, and many papers do not require this, but we encourage authors to take this into account and make a best faith effort.
    \end{itemize}

\item {\bf Licenses for existing assets}
    \item[] Question: Are the creators or original owners of assets (e.g., code, data, models), used in the paper, properly credited and are the license and terms of use explicitly mentioned and properly respected?
    \item[] Answer: \answerYes{} 
    \item[] Justification: The baselines and datasets used in this paper are drawn from the open-source implementations and the original works \citep{angelopoulos2023pid, wu2025error} are cited.
    \item[] Guidelines:
    \begin{itemize}
        \item The answer \answerNA{} means that the paper does not use existing assets.
        \item The authors should cite the original paper that produced the code package or dataset.
        \item The authors should state which version of the asset is used and, if possible, include a URL.
        \item The name of the license (e.g., CC-BY 4.0) should be included for each asset.
        \item For scraped data from a particular source (e.g., website), the copyright and terms of service of that source should be provided.
        \item If assets are released, the license, copyright information, and terms of use in the package should be provided. For popular datasets, \url{paperswithcode.com/datasets} has curated licenses for some datasets. Their licensing guide can help determine the license of a dataset.
        \item For existing datasets that are re-packaged, both the original license and the license of the derived asset (if it has changed) should be provided.
        \item If this information is not available online, the authors are encouraged to reach out to the asset's creators.
    \end{itemize}

\item {\bf New assets}
    \item[] Question: Are new assets introduced in the paper well documented and is the documentation provided alongside the assets?
    \item[] Answer: \answerYes{} 
    \item[] Justification: The model introduced in the paper is well documented and the documentation is provided in the anonymous Github link in Appendix~\ref{App:code}.
    \item[] Guidelines:
    \begin{itemize}
        \item The answer \answerNA{} means that the paper does not release new assets.
        \item Researchers should communicate the details of the dataset\slash code\slash model as part of their submissions via structured templates. This includes details about training, license, limitations, etc. 
        \item The paper should discuss whether and how consent was obtained from people whose asset is used.
        \item At submission time, remember to anonymize your assets (if applicable). You can either create an anonymized URL or include an anonymized zip file.
    \end{itemize}

\item {\bf Crowdsourcing and research with human subjects}
    \item[] Question: For crowdsourcing experiments and research with human subjects, does the paper include the full text of instructions given to participants and screenshots, if applicable, as well as details about compensation (if any)? 
    \item[] Answer: \answerNA{} 
    \item[] Justification: The paper does not involve crowdsourcing nor research with human subjects.
    \item[] Guidelines:
    \begin{itemize}
        \item The answer \answerNA{} means that the paper does not involve crowdsourcing nor research with human subjects.
        \item Including this information in the supplemental material is fine, but if the main contribution of the paper involves human subjects, then as much detail as possible should be included in the main paper. 
        \item According to the NeurIPS Code of Ethics, workers involved in data collection, curation, or other labor should be paid at least the minimum wage in the country of the data collector. 
    \end{itemize}

\item {\bf Institutional review board (IRB) approvals or equivalent for research with human subjects}
    \item[] Question: Does the paper describe potential risks incurred by study participants, whether such risks were disclosed to the subjects, and whether Institutional Review Board (IRB) approvals (or an equivalent approval/review based on the requirements of your country or institution) were obtained?
    \item[] Answer: \answerNA{} 
    \item[] Justification: The paper does not involve crowdsourcing nor research with human subjects.
    \item[] Guidelines:
    \begin{itemize}
        \item The answer \answerNA{} means that the paper does not involve crowdsourcing nor research with human subjects.
        \item Depending on the country in which research is conducted, IRB approval (or equivalent) may be required for any human subjects research. If you obtained IRB approval, you should clearly state this in the paper. 
        \item We recognize that the procedures for this may vary significantly between institutions and locations, and we expect authors to adhere to the NeurIPS Code of Ethics and the guidelines for their institution. 
        \item For initial submissions, do not include any information that would break anonymity (if applicable), such as the institution conducting the review.
    \end{itemize}

\item {\bf Declaration of LLM usage}
    \item[] Question: Does the paper describe the usage of LLMs if it is an important, original, or non-standard component of the core methods in this research? Note that if the LLM is used only for writing, editing, or formatting purposes and does \emph{not} impact the core methodology, scientific rigor, or originality of the research, declaration is not required.
    \item[] Answer: \answerNA{} 
    \item[] Justification: LLM is used only for writing and editing.
    \item[] Guidelines:
    \begin{itemize}
        \item The answer \answerNA{} means that the core method development in this research does not involve LLMs as any important, original, or non-standard components.
        \item Please refer to our LLM policy in the NeurIPS handbook for what should or should not be described.
    \end{itemize}

\end{enumerate}